\newtheorem{theorem}{Theorem}[section]
\newtheorem{lemma}[theorem]{Lemma}
\newtheorem{definition}[theorem]{Definition}
\DeclareMathOperator*{\argmax}{arg\,max}
\newcommand{\Dtrain}{\mathcal{D}_{\textbf{train}}}
\newcommand{\Dpool}{\mathcal{D}_{\textbf{pool}}}
\newcommand{\Model}{\mathcal{M}}
\newcommand{\theAF}{a}
\newcommand{\BALDAF}{\theAF_{\mathtt{BALD}}}
\newcommand{\BatchBALDAF}{\theAF_{\mathtt{BatchBALD}}}
\newcommand{\y}{y}
\newcommand{\x}{\boldsymbol{x}}
\newcommand{\yi}{y_i}
\newcommand{\sxbs}{\x_{1}^{*}, ..., \x_{\numB}^{*}}
\newcommand{\ybv}{y_{1:b}}
\newcommand{\ybs}{y_1, ..., y_b}
\newcommand{\ybi}{y_{b+1}}
\newcommand{\ybiv}{y_{1:b+1}}
\newcommand{\xbi}{\x_{b+1}}
\newcommand{\xbv}{\x_{1:b}}
\newcommand{\xbiv}{\x_{1:b+1}}
\newcommand{\xbs}{\x_{1}, ..., \x_{b}}
\newcommand{\yn}{y_n}
\newcommand{\ynv}{y_{1:n}}
\newcommand{\yns}{y_1, ..., y_n}
\newcommand{\ynds}{y_1, ..., y_{n-1}}
\newcommand{\yndv}{y_{1:n-1}}
\newcommand{\xn}{\x_{n}}
\newcommand{\xnd}{\x_{n-1}}
\newcommand{\xns}{\x_1, ..., \x_n}
\newcommand{\xndv}{\x_{1:n-1}}
\newcommand{\xnds}{\x_1, ..., \x_{n-1}}
\newcommand{\ii}{i}
\newcommand{\jj}{j}
\newcommand{\numK}{k}
\newcommand{\numC}{c}
\newcommand{\numB}{b}
\newcommand{\numM}{m}
\newcommand{\numN}{n}
\DeclareMathOperator{\opE}{\mathbbm{E}}
\DeclareMathOperator{\opH}{\mathbbm{H}}
\DeclareMathOperator{\opI}{\mathbbm{I}}
\DeclareMathOperator{\opp}{p}
\DeclareMathOperator{\opq}{q}
\DeclareMathOperator{\opmus}{\mu^*}
\newcommand{\im}[1]{\opmus \mathopen{}\left ( #1 \right )\mathclose{}}
\newcommand{\Entropy}[1]{\opH ( #1 )}
\newcommand{\Hc}[2]{\opH ( #1 \mathbin{\vert} #2 )}
\newcommand{\MI}[2]{\opI ( #1 \, ; \, #2 )}
\newcommand{\MIc}[3]{\opI ( #1 \, ; \, #2 \mathbin{\vert} #3  )}
\newcommand{\prob}[1]{\opp ( #1 )}
\newcommand{\probc}[2]{\opp ( #1 \mathbin{\vert} #2 )}
\newcommand{\qprobc}[2]{\opq ( #1 \mathbin{\vert} #2 )}
\newcommand{\E}[2]{\opE_{#2} \left [ #1 \right ]}
\newcommand{\chainedE}[2]{\opE_{#2} {#1}}
\newcommand{\w}{\pmb{\omega}}
\newcommand{\pw}{\prob{\w}}
\newcommand{\wj}{\hat{\pmb{\omega}}_\jj}
\title{BatchBALD: Efficient and Diverse Batch Acquisition for Deep Bayesian Active Learning}
\author{%
  Andreas Kirsch\thanks{joint first authors} \hspace{5mm}
  Joost van Amersfoort\footnotemark[1] \hspace{5mm} Yarin Gal \\
  OATML \\
  Department of Computer Science \\
  University of Oxford \\
  \texttt{\{andreas.kirsch, joost.van.amersfoort, yarin\}@cs.ox.ac.uk}
}
\begin{document}

\maketitle

\begin{abstract}
We develop BatchBALD, a tractable approximation to the mutual information
between a batch of points and model parameters, which we use as an acquisition
function to select multiple informative points jointly for the task of deep
Bayesian active learning. BatchBALD is a greedy linear-time $1 -
\nicefrac{1}{e}$-approximate algorithm amenable to dynamic programming and
efficient caching. We compare BatchBALD to the commonly used approach for batch
data acquisition and find that the current approach acquires similar and
redundant points, sometimes performing worse than randomly acquiring data. We
finish by showing that, using BatchBALD to consider dependencies within an
acquisition batch, we achieve new state of the art performance on standard
benchmarks, providing substantial data efficiency improvements in batch
acquisition.
\end{abstract}

\section{Introduction}
A key problem in deep learning is data efficiency. While excellent performance
can be obtained with modern tools, these are often data-hungry, rendering the
deployment of deep learning in the real-world challenging for many tasks.
Active learning (AL) \citep{cohn1996active} is a powerful technique for
attaining data efficiency. Instead of a-priori collecting and labelling a large
dataset, which often comes at a significant expense, in AL we iteratively
acquire labels from an expert only for the most informative data points from a
pool of available unlabelled data. After each acquisition step, the newly
labelled points are added to the training set, and the model is retrained. This
process is repeated until a suitable level of accuracy is achieved. The goal of AL
is to minimise the amount of data that needs to be labelled.
AL has already made real-world impact in manufacturing
\citep{tong2001active}, robotics \citep{calinon2007learning}, recommender
systems  \citep{adomavicius2005toward}, medical imaging \citep{hoi2006batch},
and NLP \citep{siddhant2018deep}, motivating the need for pushing AL even
further.

In AL, the informativeness of new points is assessed by an \emph{acquisition
function}. There are a number of intuitive choices, such as model uncertainty
and mutual information, and, in this paper, we focus on BALD
\citep{houlsby2011bayesian}, which has proven itself in the context of deep
learning \citep{gal2017deep, shen2017deep, janz2017actively}. BALD is based on
mutual information and scores points based on how well their label would inform
us about the true model parameter distribution. In deep learning models
\citep{he2016deep, simonyan2014very}, we generally treat the parameters as point
estimates instead of distributions. However, Bayesian neural networks have
become a powerful alternative to traditional neural networks and do provide a
distribution over their parameters. Improvements in approximate inference
\citep{blundell2015weight,gal2016dropout} have enabled their usage for high
dimensional data such as images and in conjunction with BALD for Bayesian AL of
images \citep{gal2017deep}.

In practical AL applications, instead of single data points, batches of data
points are acquired during each acquisition step to reduce the number of times
the model is retrained and expert-time is requested. Model retraining becomes a
computational bottleneck for larger models while expert time is
expensive: consider, for example, the effort that goes into commissioning a medical specialist to label
 a single MRI scan, then waiting until the model is retrained, and then commissioning a new
medical specialist to label the next MRI scan, and the extra amount of time this takes.

In \citet{gal2017deep}, \emph{batch acquisition}, i.e. the acquisition of
multiple points, takes the top $\numB$ points with the highest BALD acquisition
score. This naive approach leads to acquiring points that are individually very
informative, but not necessarily so jointly. See figure
\ref{acquisition_example} for such a batch acquisition of BALD in which it performs
poorly whereas scoring points jointly ("BatchBALD") can find \emph{batches} of
informative data points. Figure \ref{rmnist_graph} shows how a dataset
consisting of repeated MNIST digits (with added Gaussian noise) leads BALD to
perform worse than random acquisition while BatchBALD sustains good performance.
\begin{figure}[!tbp]
	\begin{minipage}[t]{0.49\textwidth}
		\centering
		\includegraphics[width=0.8\linewidth]{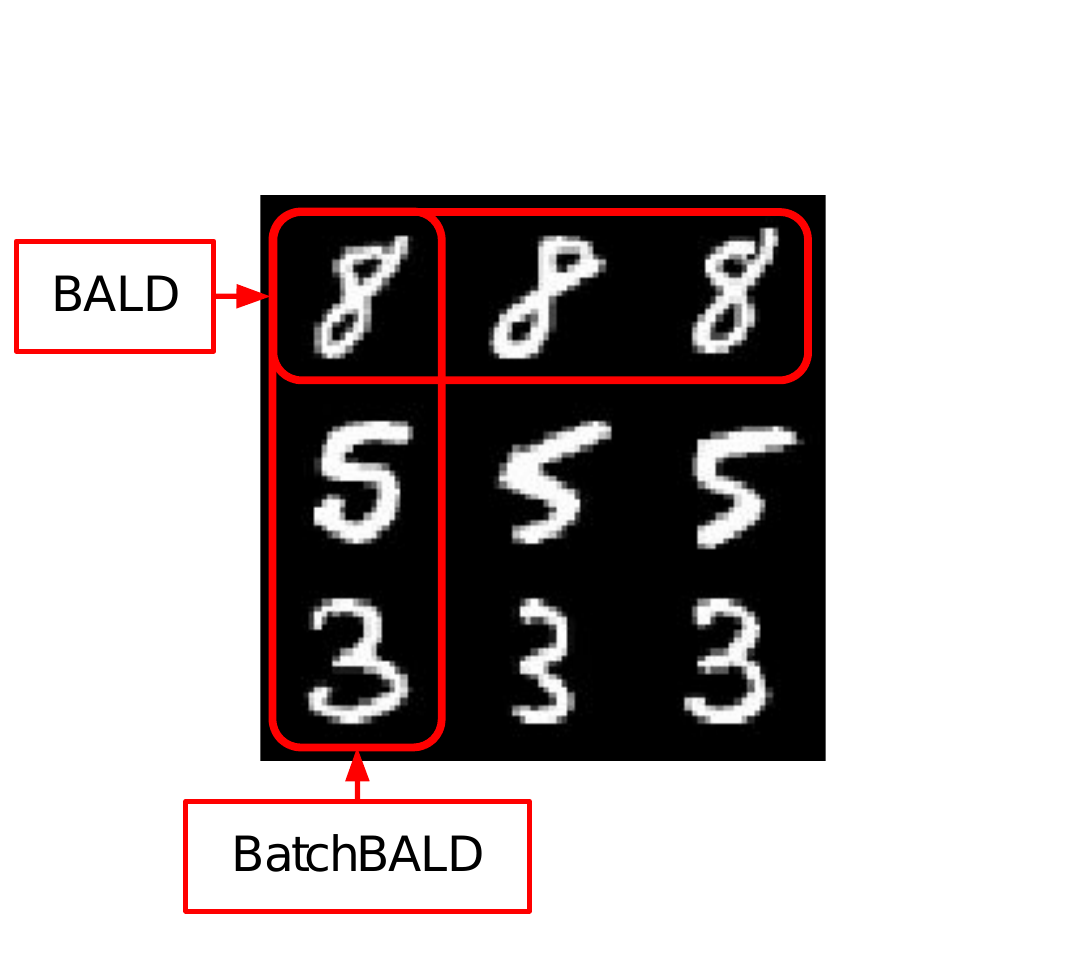}
		\caption{\emph{Idealised acquisitions of BALD and BatchBALD.}
		If a dataset
		were to contain many (near) replicas for each data point, then BALD would select
		all replicas of a single informative data point at the expense of other
		informative data points, wasting data efficiency. }
		\label{acquisition_example}
	\end{minipage}
	\hfill
	\begin{minipage}[t]{0.49\textwidth}
		\includegraphics[width=0.95\linewidth]{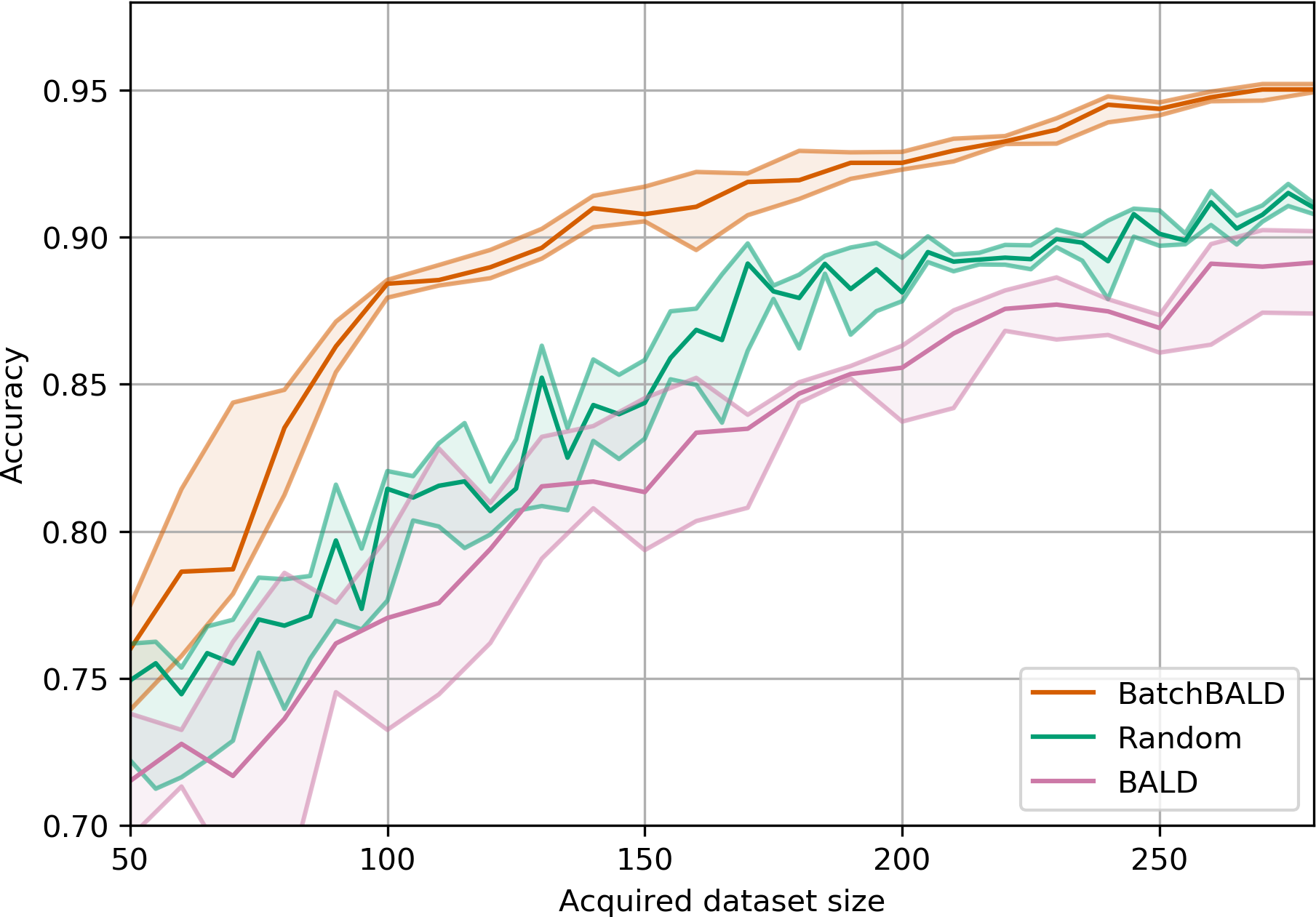}
		\caption{\emph{Performance on \emph{Repeated MNIST} with acquisition size 10}.
		See section \ref{repeated_mnist} for further details. BatchBALD
		outperforms BALD while BALD performs worse than random
		acquisition due to the replications in the dataset.}
		\label{rmnist_graph}
	\end{minipage}
\end{figure}

Naively finding the best batch to acquire requires enumerating all possible
subsets within the available data, which is intractable as the number of
potential subsets grows exponentially with the acquisition size $\numB$ and the
size of available points to choose from. Instead, we develop a greedy algorithm
that selects a batch in linear time, and show that it is at worst a $1 -
\nicefrac{1}{e}$ approximation to the optimal choice for our acquisition
function. We provide an open-source
implementation\footnote{\url{https://github.com/BlackHC/BatchBALD}}.

The main contributions of this work are:
\begin{enumerate}
	\item \emph{BatchBALD}, a data-efficient active learning method that acquires \emph{sets} of high-dimensional image
	data, leading to improved data efficiency and reduced total run time, section \ref{batchbald};
	\item a greedy algorithm to select a batch of points efficiently, section \ref{batchbald_algorithm}; and
    \item an estimator for the acquisition function that scales to larger acquisition sizes and to datasets with
	many classes, section \ref{batchbald_derivation}.
\end{enumerate}

\section{Background}

\subsection{Problem Setting}
The Bayesian active learning setup consists of an unlabelled dataset $\Dpool$,
the current training set $\Dtrain$, a Bayesian model $\Model$ with model
parameters $\w \sim \probc{\w}{\Dtrain}$, and output predictions $\probc{\y}{\x, \w,
\Dtrain}$ for data point $\x$ and prediction $\y \in \left \{ 1, ..., \numC
\right \}$ in the classification case. The conditioning of $\w$ on $\Dtrain$
expresses that the model has been trained with $\Dtrain$. Furthermore, an oracle
can provide us with the correct label $\tilde{y}$ for a data point in the
unlabelled pool $\x \in \Dpool$. The goal is to obtain a certain level of
prediction accuracy with the least amount of oracle queries. At each acquisition
step, a batch of data points $\left \{ \sxbs \right \}$ is selected using an acquisition function $\theAF$ which scores a
candidate batch of unlabelled data points $\left \{ \xbs \right \} \subseteq
\Dpool$ using the current model parameters $\probc{\w}{\Dtrain}$:
\begin{equation}
    \left \{ \sxbs \right \} = \argmax_{ \left \{ \xbs \right \} \subseteq \Dpool} {\theAF \left ( \left \{ \xbs \right \}, \probc{\w}{\Dtrain} \right )}
    .
\end{equation}

\subsection{BALD}
BALD (\emph{Bayesian Active Learning by Disagreement}) \citep{houlsby2011bayesian}
uses an acquisition function that estimates the mutual information between the
model predictions and the model parameters. Intuitively, it captures how
strongly the model predictions for a given data point and the model parameters
are coupled, implying that finding out about the true label of data points with
high mutual information would also inform us about the true model parameters.
Originally introduced outside the context of deep learning, the only
requirement on the model is that it is Bayesian. BALD is defined as:
\begin{align}
    \label{eq:BALD}
    \MIc{\y}{\w}{\x, \Dtrain} = \Hc{\y}{\x, \Dtrain} - \E{\Hc{\y}{\x, \w, \Dtrain}}{\probc{\w}{\Dtrain}}.
\end{align}
Looking at the two terms in equation \eqref{eq:BALD}, for the
mutual information to be high, the left term has to be high and the right term
low. The left term is the entropy of the model prediction, which is high when
the model's prediction is uncertain. The right term is an expectation of the
entropy of the model prediction over the posterior of the model parameters and
is low when the model is overall certain for each draw of model parameters from
the posterior. Both can only happen when the model has many possible ways to
explain the data, which means that the posterior draws are disagreeing among
themselves.

BALD was originally intended for acquiring individual data points and immediately
retraining the model. This becomes a bottleneck in deep learning, where
retraining takes a substantial amount of time. Applications of
BALD \citep{gal2016dropout, janz2017actively} usually acquire the top $\numB$.
This can be expressed as summing over individual scores:
\begin{align}
	\BALDAF \left ( \left \{ \xbs \right \}, \probc{\w}{\Dtrain} \right ) = \sum_{\ii=1}^{\numB} \MIc{\y_i}{\w}{\x_i, \Dtrain},
\end{align}
and finding the optimal batch for this acquisition function using a greedy algorithm,
which reduces to picking the top $\numB$ highest-scoring data points.

\subsection{Bayesian Neural Networks (BNN)}
In this paper we focus on BNNs as our Bayesian model because they scale well to high
dimensional inputs, such as images. Compared to regular neural networks, BNNs
maintain a distribution over their weights instead of point estimates.
Performing exact inference in BNNs is intractable for any reasonably sized
model, so we resort to using a variational approximation. Similar to
\citet{gal2017deep}, we use MC dropout \citep{gal2016dropout}, which is easy to
implement, scales well to large models and datasets, and is straightforward to
optimise.

\section{Methods}

\subsection{BatchBALD}
\label{batchbald}

We propose \emph{BatchBALD} as an extension of BALD whereby we jointly score points by estimating the mutual information between a \emph{joint of multiple data points} and the model parameters:\footnote{
	We use the notation $\MIc{x, y}{ z}{c}$ to denote the mutual information between the \emph{joint of the random variables} $x, y$ and the random variable $z$ conditioned on $c$.
	}
\begin{align}
	\BatchBALDAF \left ( \left \{ \xbs \right \}, \probc{\w}{\Dtrain} \right ) = \MIc{\ybs }{ \w}{\xbs, \Dtrain}.
\end{align}
This builds on the insight that independent selection of a batch of data points
leads to data inefficiency as correlations between data points in an acquisition
batch are not taken into account.

To understand how to compute the mutual information between a set of points and the
model parameters, we express $\xbs$, and $\ybs$ through joint random
variables $\xbv$ and $\ybv$ in a product probability space and use the definition of the mutual information for two random variables:
\begin{align}
    \MIc{\ybv}{\w}{\xbv, \Dtrain} = \Hc{\ybv}{\xbv, \Dtrain} - \chainedE{\Hc{\ybv}{\xbv, \w, \Dtrain}}{\probc{\w}{\Dtrain}}.
\end{align}
Intuitively, the mutual information between two random variables can be seen as the intersection of their information content.
In fact, \citet{yeung1991new} shows that a signed measure $\mu^*$ can be defined for discrete random variables $x$, $y$, such that
$\MI{x}{y} = \im{x \cap y}$, $\Entropy{x,y} = \im{x \cup y}$, $\chainedE{\Hc{x}{y}}{\prob{y}} = \im{x \setminus y}$, and so on,
where we identify random variables with their counterparts in information space, and conveniently drop conditioning on $\Dtrain$ and $\x_i$.

Using this, BALD can be viewed as the sum of individual intersections $\sum_\ii \im{\yi \cap \w}$,
which double counts overlaps between the $\yi$. Naively extending BALD to the mutual information between $\ybs$ and $\w$,
which is equivalent to $\im{\bigcap_\ii \yi \cap \w}$, would lead to selecting \emph{similar} data points instead of diverse ones under maximisation.

BatchBALD, on the other hand, takes overlaps into account by computing $\im{\bigcup_\ii \yi \cap \w}$
and is more likely to acquire a more diverse cover under maximisation:
\begin{align}
	& \MIc{\ybs }{ \w}{\xbs, \Dtrain} = \Hc{\ybv}{\xbv, \Dtrain} - \chainedE{\Hc{\ybv}{\xbv, \w, \Dtrain}}{\probc{\w}{\Dtrain}} \\
	= & \im{\bigcup_\ii \yi} - \im{\bigcup_\ii \yi \setminus \w} = \im{\bigcup_\ii \yi \cap \w}
\end{align}
This is depicted in figure \ref{im_intuition} and also motivates that $\BatchBALDAF \le \BALDAF$, which we prove in appendix \ref{bald_approximates_batchbald}.
For acquisition size 1, BatchBALD and BALD are equivalent.

\begin{figure}[t]
	\begin{subfigure}[t]{.49\textwidth}%
		\centering
		\includegraphics[width=0.65\linewidth]{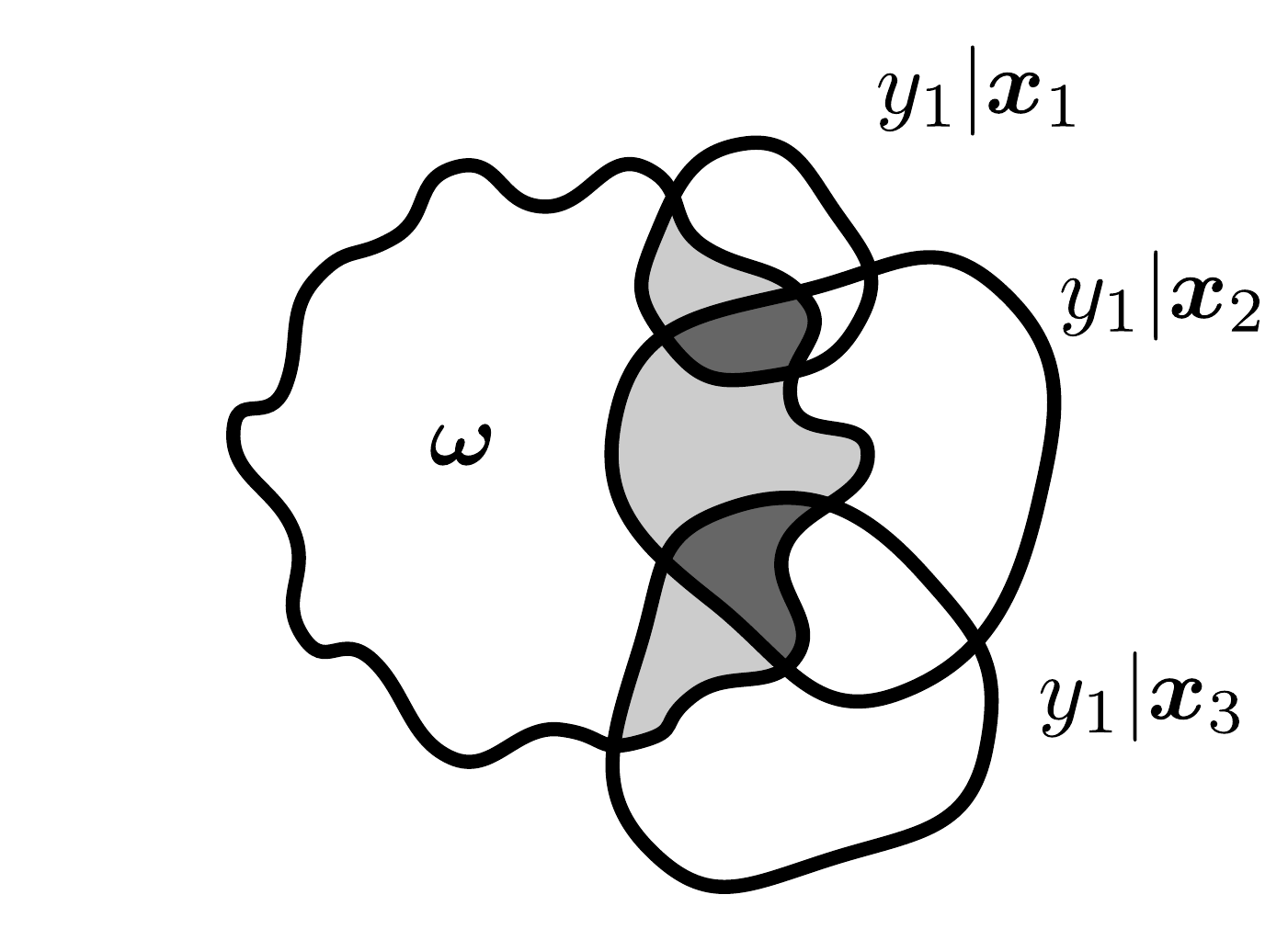} %
	\end{subfigure}%
	\begin{subfigure}[t]{.49\textwidth}%
		\centering
		\includegraphics[width=0.65\linewidth]{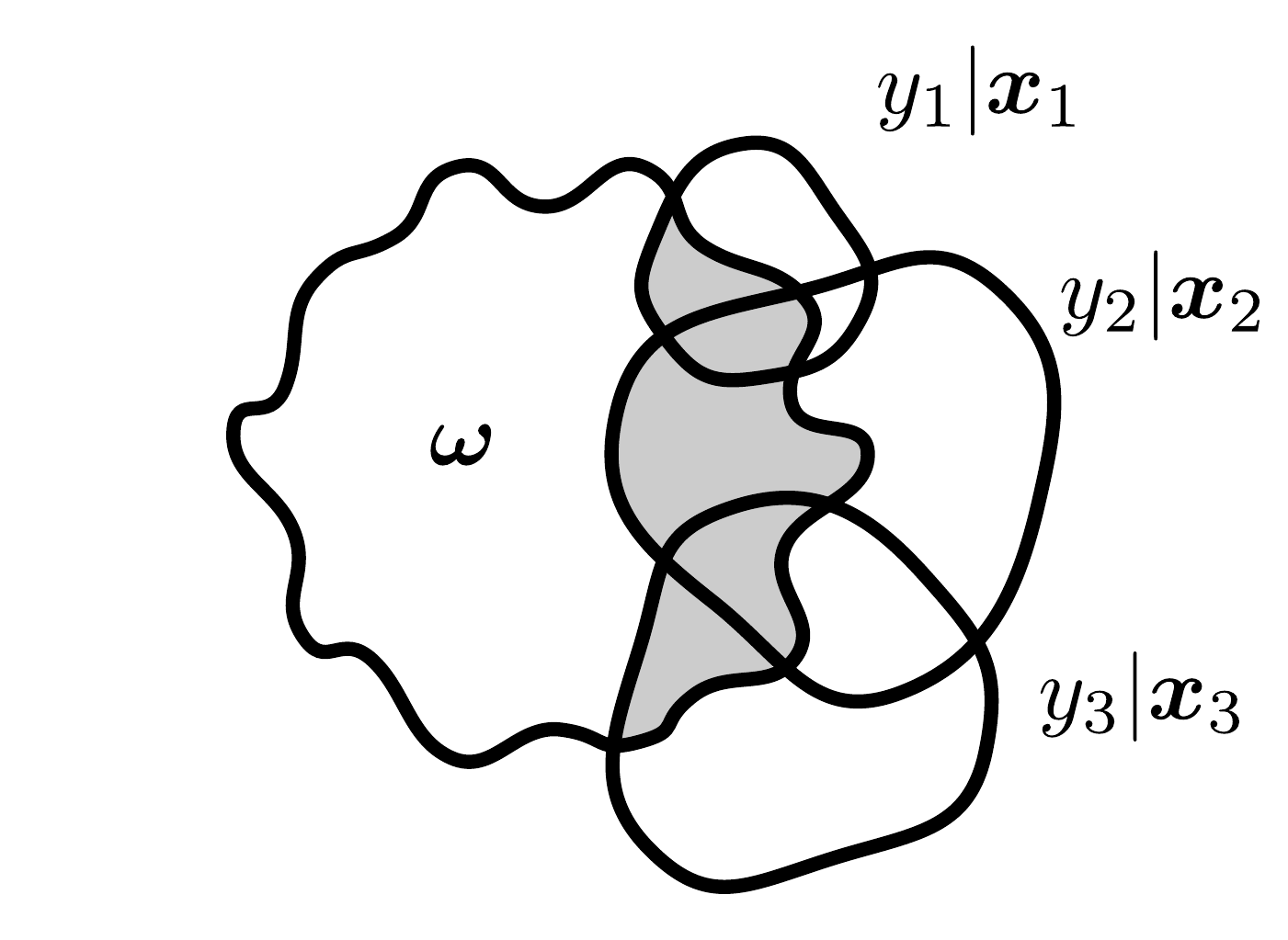}
	\end{subfigure}
	\begin{subfigure}[b]{.49\textwidth}%
		$$\sum_{\ii} \MIc{\y_i}{\w}{\x_i, \Dtrain} = \sum_\ii \im{\yi \cap \w}$$
		\caption{\textbf{BALD}}
		\label{BALD_intuition}
	\end{subfigure}%
	\begin{subfigure}[b]{.49\textwidth}%
		$$\MIc{\ybs }{ \w}{\xbs, \Dtrain} = \im{\bigcup_\ii \yi \cap \w}$$
		\caption{\textbf{BatchBALD}}
		\label{BatchBALD_intuition}
	\end{subfigure}%
	\caption{\emph{Intuition behind \emph{BALD} and \emph{BatchBALD} using I-diagrams \citep{yeung1991new}.}
	\emph{BALD} overestimates the joint mutual information. \emph{BatchBALD}, however, takes the overlap between variables into account and will strive to acquire a better cover of $\w$.
	Areas contributing to the respective score are shown in grey, and areas that are double-counted in dark grey.}
	\label{im_intuition}
\end{figure}

\subsection{Greedy approximation algorithm for BatchBALD}
\label{batchbald_algorithm}

\begin{algorithm}[t]
	\caption{Greedy BatchBALD $1 - \nicefrac{1}{e}$-approximate algorithm \label{algo:greedy_batchbald}}

    \DontPrintSemicolon
    \SetAlgoLined

	\KwIn{acquisition size $\numB$, unlabelled dataset $\Dpool$, model parameters $\probc{\w}{\Dtrain}$}

	$A_0 \leftarrow \emptyset$ \;
	\For { $n \leftarrow 1$ \KwTo $\numB$}{
		\lForEach
			{$\x \in \Dpool \setminus A_{n-1}$}
			{$\displaystyle s_{\x} \leftarrow
			\BatchBALDAF \left ( A_{n-1} \cup \left \{ \x \right \}, \probc{\w}{\Dtrain} \right )$
		} \label{code:batch_bald_computation}

	 	$\displaystyle \x_{n} \leftarrow \argmax_{ \x \in \Dpool \setminus A_{n-1} } s_{\x} $ \;
	 	$\displaystyle A_n \leftarrow A_{n-1} \cup \left \{ \x_n \right \}$	\;
	}
	\KwOut{ acquisition batch $A_n = \left \{ \xbs \right \}$ }
\end{algorithm}

To avoid the combinatorial explosion that arises from jointly scoring subsets of
points, we introduce a greedy approximation for computing BatchBALD, depicted in
algorithm \ref{algo:greedy_batchbald}. In appendix \ref{submodular_proof}, we
prove that $\BatchBALDAF$ is submodular, which means the greedy algorithm is
$1-\nicefrac{1}{e}$-approximate \citep{nemhauser1978analysis, krause2008near}.

In appendix \ref{batchbald_equivalence}, we show that, under idealised conditions,
when using BatchBALD and a fixed final $\left| \Dtrain \right |$,
the active learning loop itself can be seen as a greedy $1-\nicefrac{1}{e}$-approximation algorithm,
and that an active learning loop with BatchBALD and acquisition size larger than 1 is bounded by an
an active learning loop with individual acquisitions,
that is BALD/BatchBALD with acquisition size 1, which is the ideal case.

\subsection{Computing $\BatchBALDAF$}
\label{batchbald_derivation}
\newcommand{\pyxh}{\probc{\ybv}{\xbv}}
\newcommand{\pyxha}{\probc{\ybiv}{\xbiv}}
\newcommand{\qyxa}{\qprobc{\ybi}{\xbi}}

\newcommand{\pyxaw}{\probc{\ybi}{\xbi \w}}
\newcommand{\pyxhw}{\probc{\ybv}{\xbv \w}}

\newcommand{\si}{s}
\newcommand{\ybvrs}{\hat{\y}_{1:\numN, s}}

\newcommand{\ybvars}{\hat{\y}_{1:\numN+1, s}}
\newcommand{\yar}{\hat{\y}_{\numN+1}}

\newcommand{\yr}{\hat{\y}}
\newcommand{\ynr}{\yr_{\numN}}
\newcommand{\ynvr}{\yr_{1:\numN}}
\newcommand{\yndvr}{\yr_{1:\numN-1}}
\newcommand{\pwfuchsia}{{\leavevmode\color{Sepia}\prob{\w}}}
{
	\renewcommand{\pw}{\pwfuchsia}

For brevity, we leave out conditioning on $\xns$, and $\Dtrain$, and $\pw$ denotes $\probc{\w}{\Dtrain}$ in this section.
$\BatchBALDAF$ is then written as:
\begin{align}
	\label{eq:BB_simple}
    \BatchBALDAF \left ( \left \{ \xns \right \}, \pw \right ) = \Entropy{\yns} - \E{\Hc{\yns}{\w}}{\pw}.
\end{align}
Because the $\y_i$ are independent when conditioned on $\w$, computing the right
term of equation \eqref{eq:BB_simple} is simplified as the conditional joint
entropy decomposes into a sum. We can approximate the expectation using a
Monte-Carlo estimator with $\numK$ samples from our model parameter distribution $\wj \sim \pw$:
\begin{equation}
    \E{\Hc{\yns}{\w}}{\pw} = \sum_{\ii=1}^{\numN}\E{\Hc{\yi}{\w}}{\pw} \approx \frac{1}{\numK} \sum_{\ii=1}^{\numN} \sum_{\jj=1}^{\numK} \Hc{\yi}{\wj}.
\end{equation}
Computing the left term of equation \eqref{eq:BB_simple} is difficult because
the unconditioned joint probability does not factorise. Applying the equality
$\prob{\y} = \E{\probc{\y}{\w}}{\pw}$, and, using sampled $\wj$, we
compute the entropy by summing over all possible configurations $\ynvr$ of
$\ynv$:
\begin{align}
    \Entropy{\yns} &= \E{-\log{\prob{\yns}}}{\prob{\yns}} \\
    &=
    	\chainedE{
			\E{-\log{
    			\E{\probc{\yns}{\w}}{\pw}
    		}}
    		{\probc{\yns}{\w}}
    	}
    	{\pw} \\
	&\approx -\sum_{\ynvr} \left ( \frac{1}{\numK} \sum_{\jj=1}^{\numK} \probc{\ynvr}{\wj} \right ) \log \left ( {
		\frac{1}{\numK} \sum_{\jj=1}^{\numK} \probc{\ynvr}{\wj} } \right ). \label{eq:bb_approx1}
\end{align}

\subsection{Efficient implementation}
In each iteration of the algorithm, $\xnds$ stay fixed
while $\xn$ varies over $\Dpool \setminus A_{n-1}$. We can reduce the required
computations by factorizing $\probc{\ynv}{\w}$ into $\probc{\yndv}{\w}
\probc{\yn}{\w}$. We store $ \probc{\yndvr}{\wj}$ in a matrix
$\hat{P}_{1:\numN-1}$ of shape $\numC^{\numN-1} \times \numK$ and $\probc{\yn}{\wj}$
in a matrix $\hat{P}_{\numN}$ of shape $\numC \times \numK$. The sum $\sum_{\jj=1}^{\numK}
\probc{\ynvr}{\wj}$ in \eqref{eq:bb_approx1} can be then be turned into a matrix product:
\begin{equation}
	\frac{1}{\numK} \sum_{\jj=1}^{\numK} \probc{\ynvr}{\wj} =
	\frac{1}{\numK} \sum_{\jj=1}^{\numK} \probc{\yndvr}{\wj} \probc{\ynr}{\wj} =  \left ( \frac{1}{\numK} \hat{P}_{1:\numN-1} \hat{P}_{\numN}^T \right )_{\yndvr, \ynr}.
\end{equation}
This can be further sped up by using batch matrix multiplication to compute the
joint entropy for different $\xn$. $\hat{P}_{1:\numN-1}$ only has to be
computed once, and we can recursively compute $\hat{P}_{1:\numN}$ using
$\hat{P}_{1:\numN-1}$ and $\hat{P}_{\numN}$, which allows us to sample $\probc{\y}{\wj}$
 for each $\x \in \Dpool$ only once at the beginning of the algorithm.

\newcommand{\cDpool}{{| \Dpool |}}

For larger acquisition sizes, we use $\numM$ MC samples of $\yndv$ as enumerating all
possible configurations becomes infeasible. See appendix \ref{batchbald_mc_approx} for details.

Monte-Carlo sampling bounds the time complexity of the full BatchBALD algorithm to $\mathcal{O} (\numB \numC \cdot \min \{ \numC^\numB, \numM \} \cdot \cDpool \cdot \numK)$ compared to $\mathcal{O} (\numC^\numB \cdot \cDpool^\numB \cdot \numK)$ for naively finding the exact optimal batch and
$\mathcal{O} ( \left ( \numB + \numK \right ) \cdot \cDpool)$ for BALD\footnote{
	$\numB$ is the acquisition size, $\numC$ is the number of classes,
	$\numK$ is the number of MC dropout samples, and
	$\numM$ is the number of sampled configurations of $\yndv$.
	}.
}

\section{Experiments}
\begin{figure}[t]
	\begin{subfigure}[t]{.49\textwidth}%
		\centering
		\includegraphics[width=0.95\linewidth]{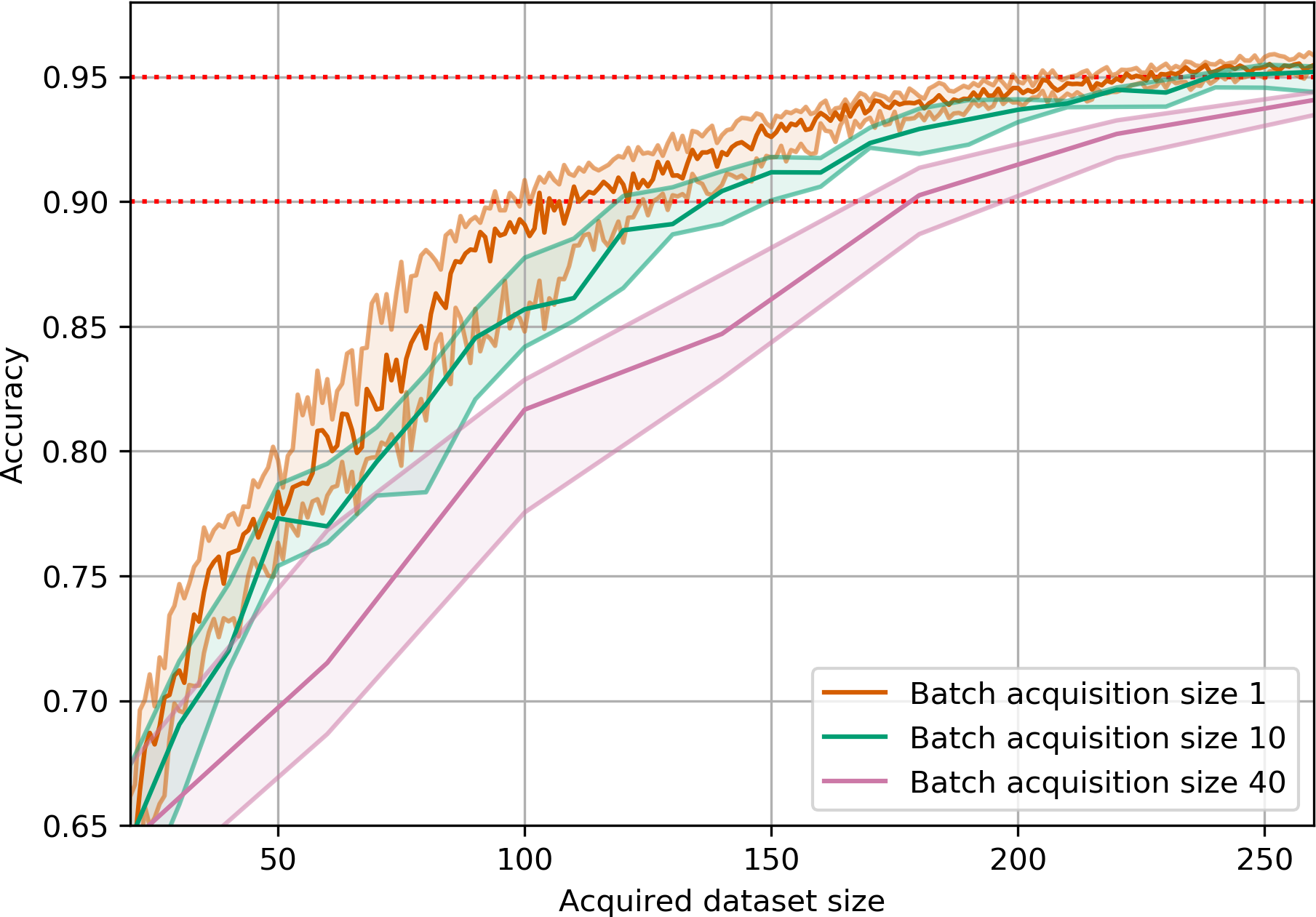} %
		\caption{\textbf{BALD}}
		\label{MNIST_BALD}
	\end{subfigure}%
	\hfill
	\begin{subfigure}[t]{.49\textwidth}%
		\centering
		\includegraphics[width=0.95\linewidth]{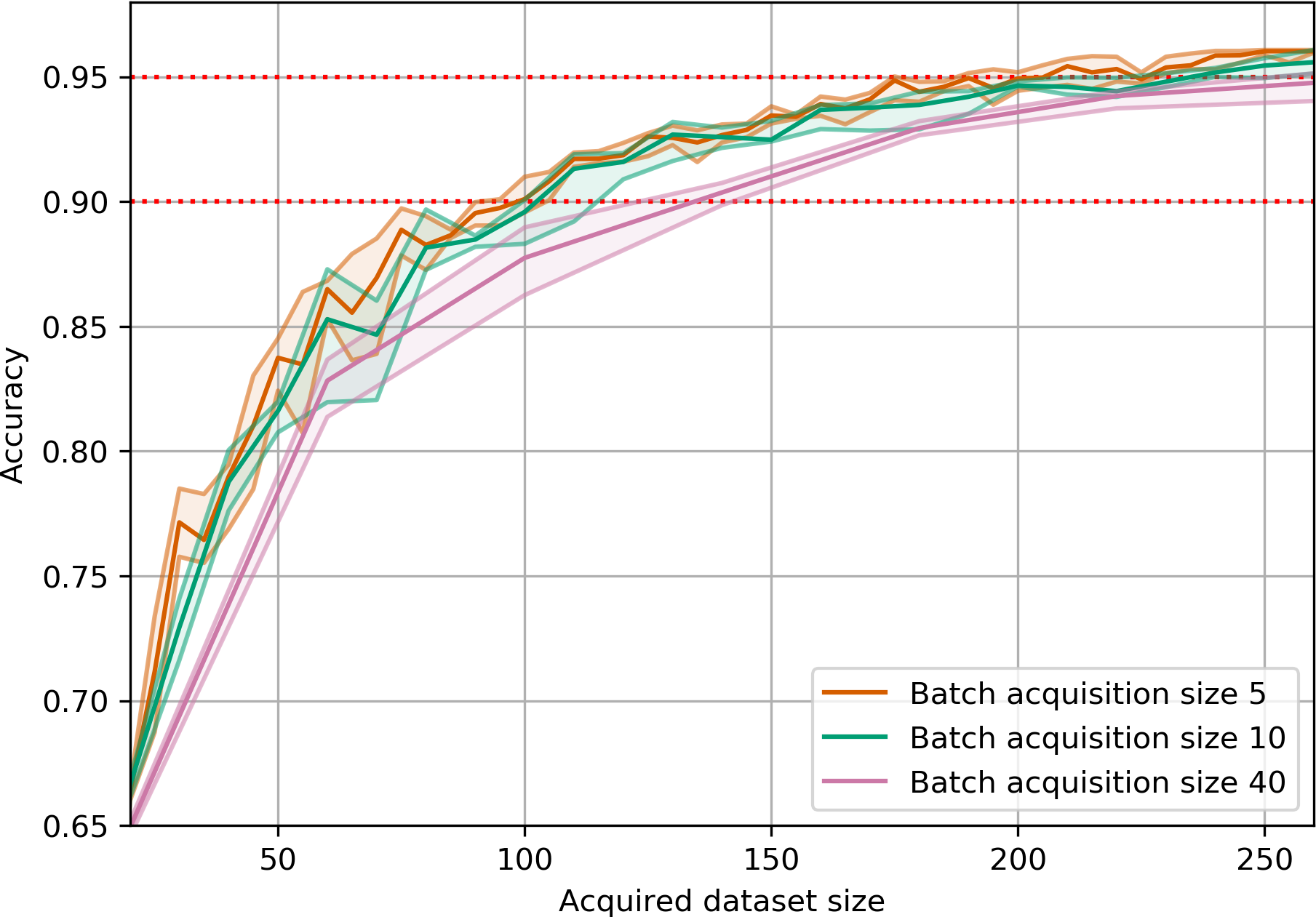}
		\caption{\textbf{BatchBALD}}
		\label{MNIST_BatchBALD}
	\end{subfigure}%
	\caption{\emph{Performance on \emph{MNIST} for increasing acquisition sizes.}
	BALD's performance drops drastically as the acquisition size increases.
	BatchBALD maintains strong performance even with increasing acquisition size.}
\end{figure}

In our experiments, we start by showing how a naive application of the BALD
algorithm to an image dataset can lead to poor results in a dataset with many
(near) duplicate data points, and show that BatchBALD solves this
problem in a grounded way while obtaining favourable results (figure
\ref{rmnist_graph}).

We then illustrate BatchBALD's effectiveness on standard AL datasets: MNIST and
EMNIST. EMNIST \citep{cohen2017emnist} is an extension of MNIST that also
includes letters, for a total of 47 classes, and has a twice as large training
set. See appendix \ref{emnist_visualisation} for examples of the dataset. We
show that BatchBALD provides a substantial performance improvement in these
scenarios, too, and has more diverse acquisitions. Finally, we look at BatchBALD
in the setting of transfer learning, where we finetune a large pretrained model
on a more difficult dataset called CINIC-10 \citep{darlow2018cinic}, which is a
combination of CIFAR-10 and downscaled ImageNet.

In our experiments, we repeatedly go through active learning loops. One active
learning loop consists of training the model on the available labelled data and
subsequently acquiring new data points using a chosen acquisition function. As the
labelled dataset is small in the beginning, it is important to avoid
overfitting. We do this by using early stopping after 3 epochs of declining
accuracy on the validation set. We pick the model with the highest validation
accuracy. Throughout our experiments, we use the Adam \citep{kingma2014adam}
optimiser with learning rate 0.001 and betas 0.9/0.999. All our results report the median
of 6 trials, with lower and upper quartiles. We use these quartiles to draw
the filled error bars on our figures.

We reinitialize the model after each acquisition, similar to
\citet{gal2017deep}. We found this helps the model improve even when very small
batches are acquired. It also decorrelates subsequent acquisitions as
final model performance is dependent on a particular
initialization \citep{frankle2018lottery}.

When computing $\probc{y}{x, \w, \Dtrain}$, it is important to keep the dropout
masks in MC dropout consistent while sampling from the model. This is
necessary to capture dependencies between the inputs for BatchBALD, and it makes
the scores for different points more comparable by removing this source of
noise. We do not keep the
masks fixed when computing BALD scores because its performance usually benefits
from the added noise. We also do not need to keep these masks fixed for training
and evaluating the model.

In all our experiments, we either compute joint entropies exactly by enumerating
all configurations, or we estimate them using 10,000 MC samples, picking
whichever method is faster. In practice, we compute joint entropies exactly for roughly the
first 4 data points in an acquisition batch and use MC sampling thereafter.
\begin{figure}[tbp]
	\begin{minipage}[t]{0.49\textwidth}
		\centering
		\includegraphics[width=0.95\linewidth]{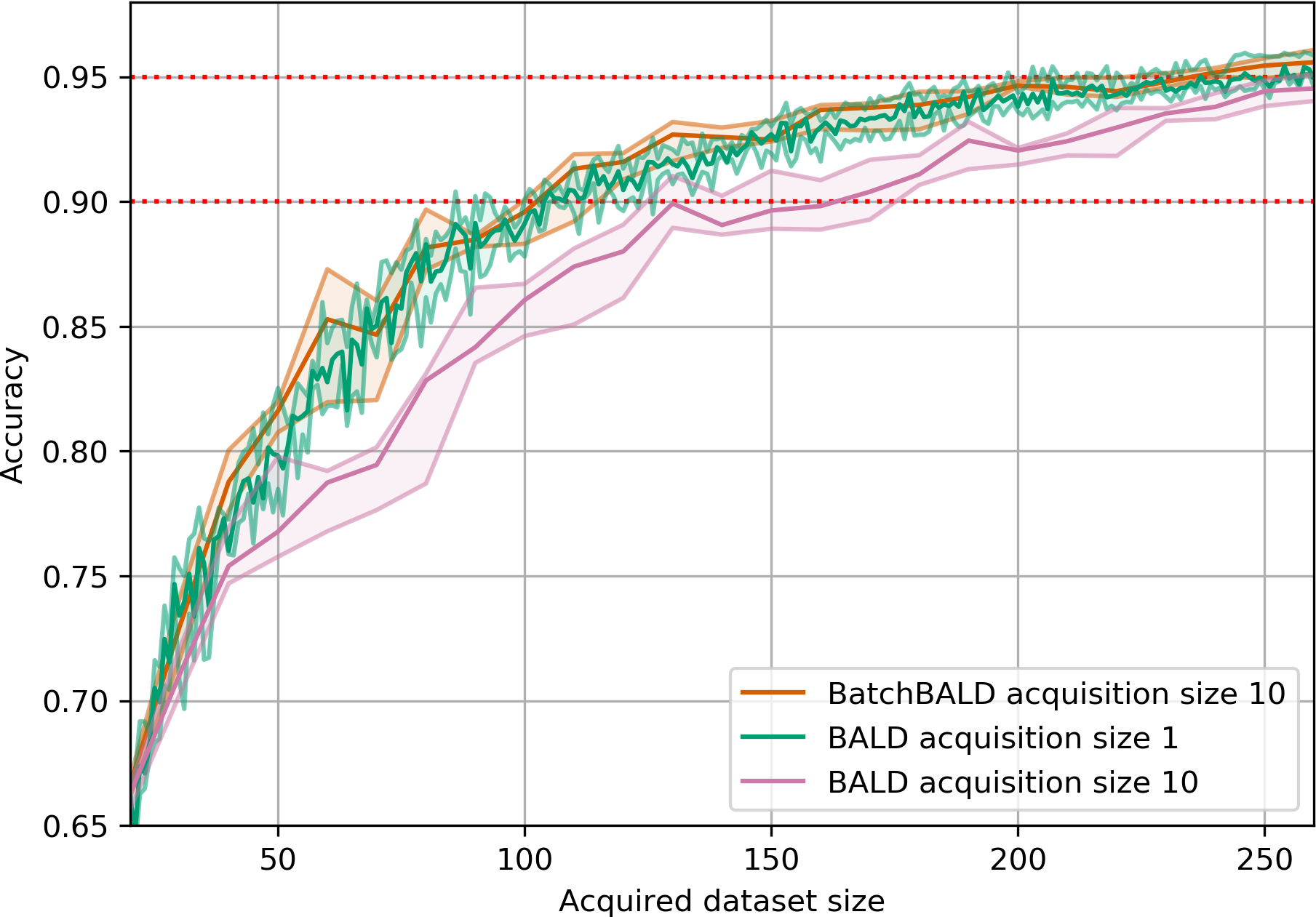}%
		\caption{\emph{Performance on \emph{MNIST}.}
		BatchBALD outperforms BALD with acquisition size 10 and performs close to
		the optimum of acquisition size 1.
		}
		\label{MNIST_BALD_MULTIBALD}
	\end{minipage}
	\hfill
	\begin{minipage}[t]{0.49\textwidth}
		\includegraphics[width=0.95\linewidth]{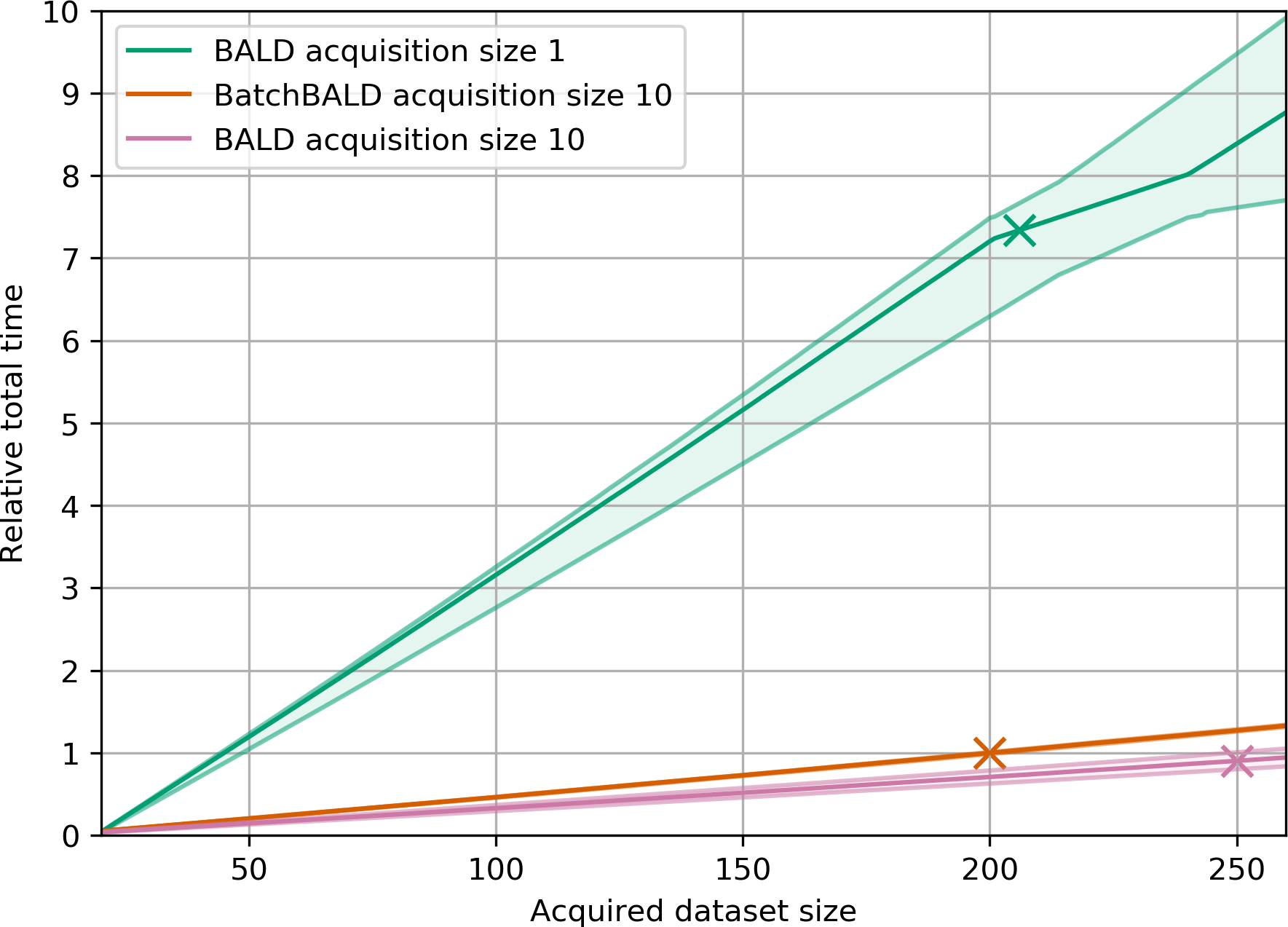}
	   \caption{\emph{Relative total time on \emph{MNIST}.}
	   Normalized to training BatchBALD with acquisition size 10 to 95\% accuracy. The stars mark when 95\%
	   accuracy is reached for each method.
	   }
	   \label{time_mnist}
	\end{minipage}
\end{figure}

\subsection{Repeated MNIST}
\label{repeated_mnist}
As demonstrated in the introduction, naively applying BALD to a dataset that
contains many (near) replicated data points leads to poor performance. We show
how this manifests in practice by taking the MNIST dataset and replicating each
data point in the training set two times (obtaining a training set that is three
times larger than the original). After normalising the dataset, we add isotropic
Gaussian noise with a standard deviation of 0.1 to simulate slight differences
between the duplicated data points in the training set. All results are obtained
using an acquisition size of 10 and 10 MC dropout samples. The initial dataset
was constructed by taking a balanced set of 20 data points\footnote{These
initial data points were chosen by running BALD 6 times with the initial dataset
picked randomly and choosing the set of the median model. They were subsequently
held fixed.}, two of each class (similar to \citep{gal2017deep}).

Our model consists of two blocks of [convolution, dropout, max-pooling, relu],
with 32 and 64 5x5 convolution filters. These blocks are followed by a two-layer
MLP that includes dropout between the layers and has 128 and 10 hidden units.
The dropout probability is 0.5 in all three locations. This architecture
achieves 99\% accuracy with 10 MC dropout samples during test time on the full MNIST dataset.

The results can be seen in figure \ref{rmnist_graph}. In this illustrative
scenario, BALD performs poorly, and even randomly acquiring points performs
better. However, BatchBALD is able to cope with the replication perfectly. In
appendix \ref{ablation_rmnist}, we look at varying the repetition number and
show that as we increase the number of repetitions BALD gradually performs
worse. In appendix \ref{rmnist_comparison_graph}, we also compare with Variation
Ratios \citep{freeman1965elementary}, and Mean STD \citep{kendall2015bayesian}
which perform on par with random acquisition.

\subsection{MNIST}

\begin{table}[t]
	\centering
	\caption{\emph{Number of required data points on \emph{MNIST} until 90\% and 95\% accuracy are reached.} 25\%-, 50\%-
	and 75\%-quartiles for the number of required data points when available.}
	\label{tb:mnist}
	\begin{tabular}{l l l}
			  & 90\% accuracy & 95\% accuracy \\ \hline
	BatchBALD & 70 / 90 / 110     & 190 / 200 / 230                \\
	BALD \footnotemark & 120 / 120 / 170   & 250 / 250 / \textgreater{}300  \\
	BALD \citep{gal2017deep} &   145   &  335
	\end{tabular}
\end{table}
\footnotetext{reimplementation using reported experimental setup}

For the second experiment, we follow the setup of \citet{gal2017deep} and perform
AL on the MNIST dataset using
100 MC dropout samples. We use the same model architecture and initial dataset
as described in section \ref{repeated_mnist}.
Due to differences in model architecture, hyper parameters and model retraining, we
significantly outperform the original results in \citet{gal2017deep} as shown in table \ref{tb:mnist}.

We first look at BALD for increasing acquisition size in figure
\ref{MNIST_BALD}. As we increase the acquisition size from the ideal of
acquiring points individually and fully retraining after each points
(acquisition size 1) to 40, there is a substantial performance drop.

BatchBALD, in figure \ref{MNIST_BatchBALD}, is able to maintain performance when
doubling the acquisition size from 5 to 10. Performance drops only slightly at
40, possibly due to estimator noise.

The results for acquisition size 10 for both BALD and BatchBALD are compared in
figure \ref{MNIST_BALD_MULTIBALD}.
BatchBALD outperforms BALD.
Indeed, BatchBALD with acquisition size 10 performs close to the ideal with acquisition size
1.
The total run time of training these three models until 95\%
accuracy is visualized in figure \ref{time_mnist}, where we see that BatchBALD with acquisition size 10
is much faster than BALD with acquisition size 1, and only marginally slower
than BALD with acquisition size 10.

\subsection{EMNIST}
In this experiment, we show that BatchBALD also provides a significant
improvement when we consider the more difficult EMNIST dataset
\citep{cohen2017emnist} in the \emph{Balanced} setup, which consists of 47
classes, comprising letters and digits. The training set consists of 112,800
28x28 images balanced by class, of which the last 18,800 images constitute the validation set.
We do not use an initial dataset and instead perform
the initial acquisition step with the randomly initialized model. We use 10 MC dropout samples.

We use a similar model architecture as before, but with added capacity. Three
blocks of [convolution, dropout, max-pooling, relu], with 32, 64 and 128 3x3
convolution filters, and 2x2 max pooling. These blocks are followed by a
two-layer MLP with 512 and 47 hidden units, with again a dropout layer in
between. We use dropout probability 0.5 throughout the model.

The results for acquisition size 5 can be seen in figure \ref{EMNIST_BALD_MULTIBALD}. BatchBALD
outperforms both random acquisition and BALD while BALD is unable to beat
random acquisition. Figure
\ref{entropy_labels_emnist} gives some insight into why BatchBALD performs better than
BALD. The entropy of the categorical distribution of acquired class labels is consistently
higher, meaning that BatchBALD acquires a more diverse set of data points. In
figure \ref{histogram_labels_emnist}, the classes on the x-axis are sorted by
number of data points that were acquired of that class. We see that BALD
undersamples classes while BatchBALD is more consistent.

\begin{figure}[t]
	\begin{minipage}[t]{0.49\textwidth}
		\centering
		\includegraphics[width=0.95\linewidth]{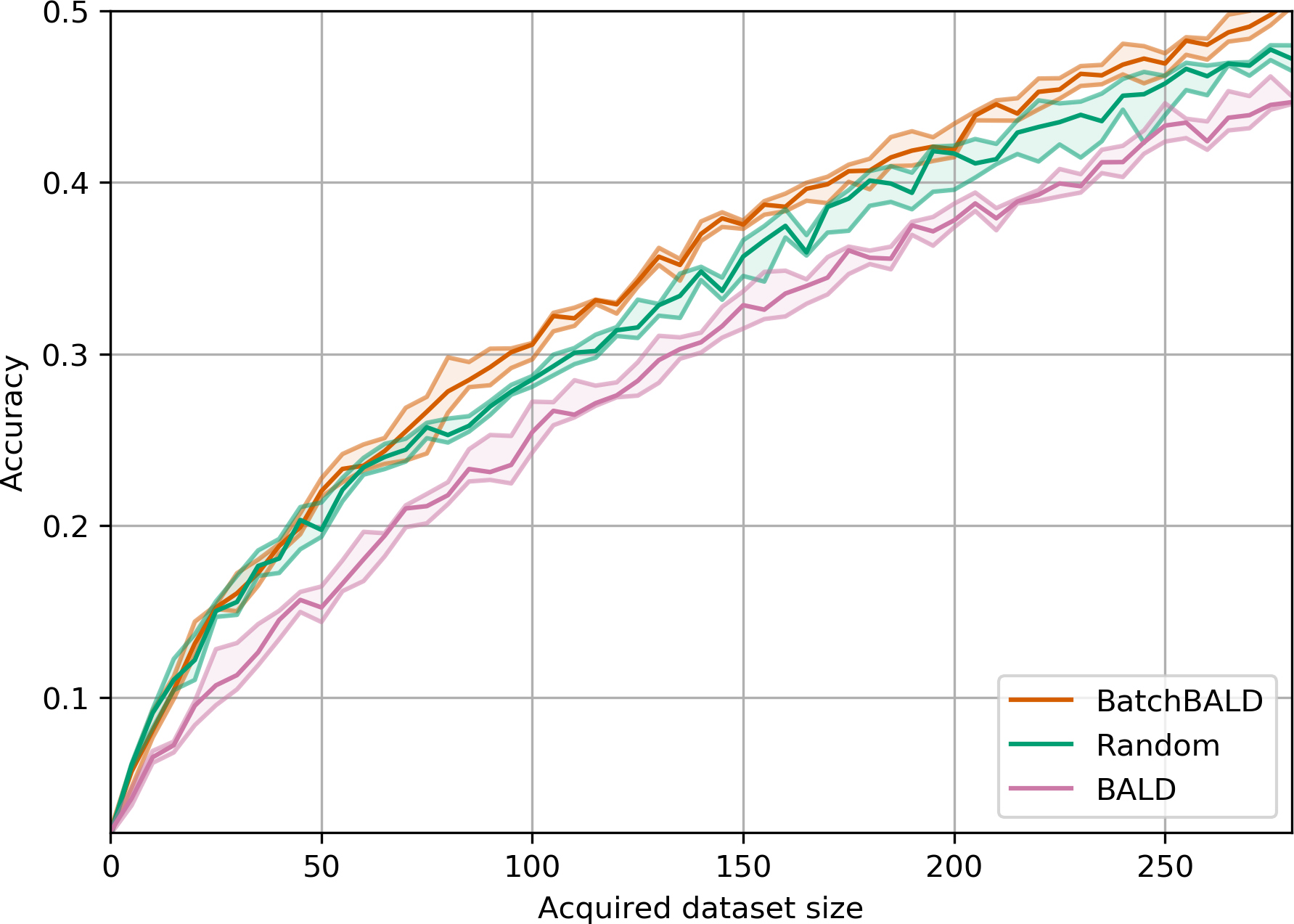}
		\caption{\emph{Performance on \emph{EMNIST}.}
		BatchBALD consistently outperforms both random acquisition and BALD
		while BALD is unable to beat random acquisition.
		}
		\label{EMNIST_BALD_MULTIBALD}
	\end{minipage}
	\hfill
	\begin{minipage}[t]{0.49\textwidth}
		\centering
        \includegraphics[width=0.99\linewidth]{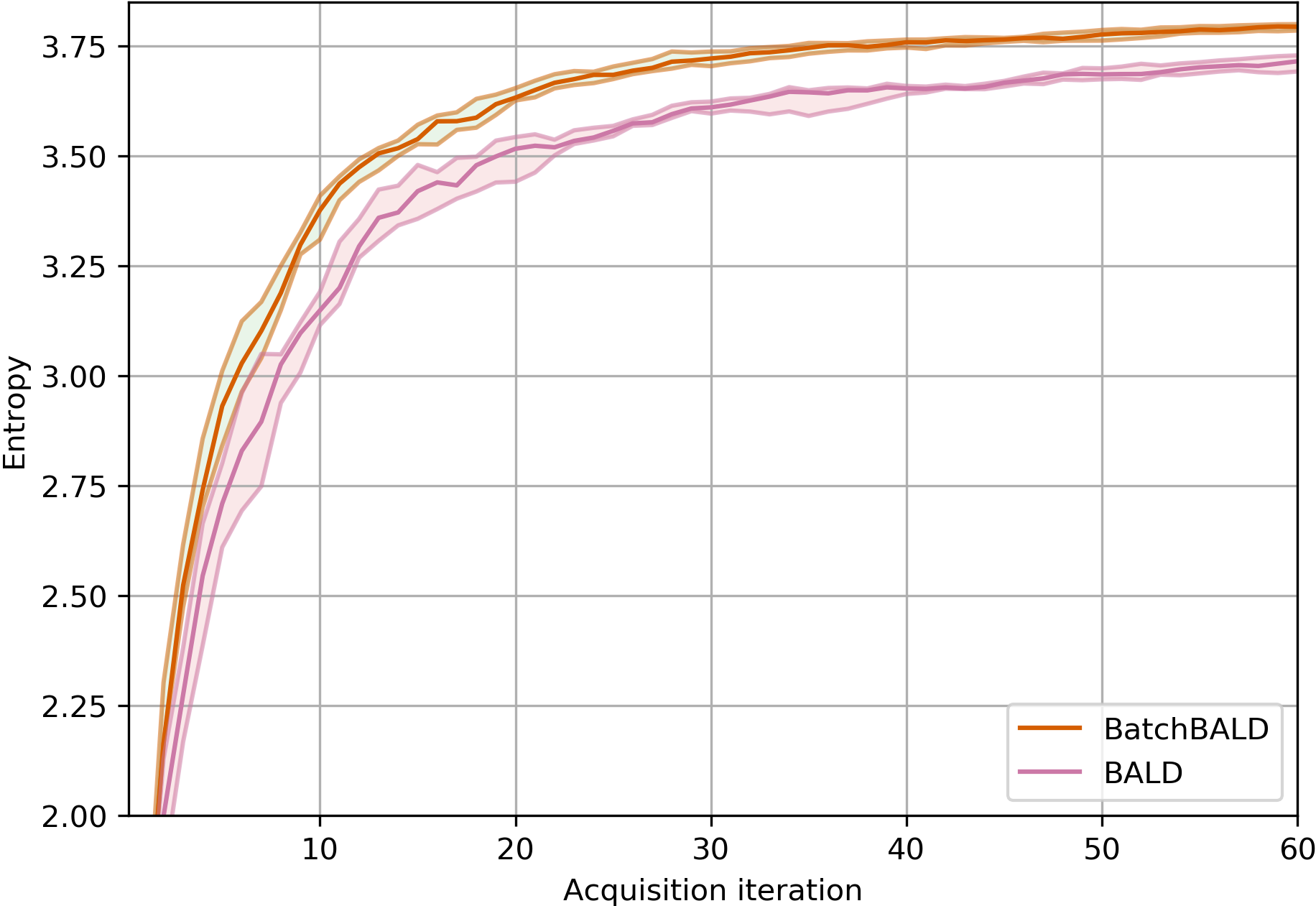}
		\caption{\emph{Entropy of acquired class labels over acquisition steps on \emph{EMNIST}.}
		BatchBALD steadily acquires a more diverse set of data points.
		}
        \label{entropy_labels_emnist}
	\end{minipage}
\end{figure}

\begin{wrapfigure}{HLR}{0.45\textwidth}
	\centering
	\includegraphics[width=0.95\linewidth]{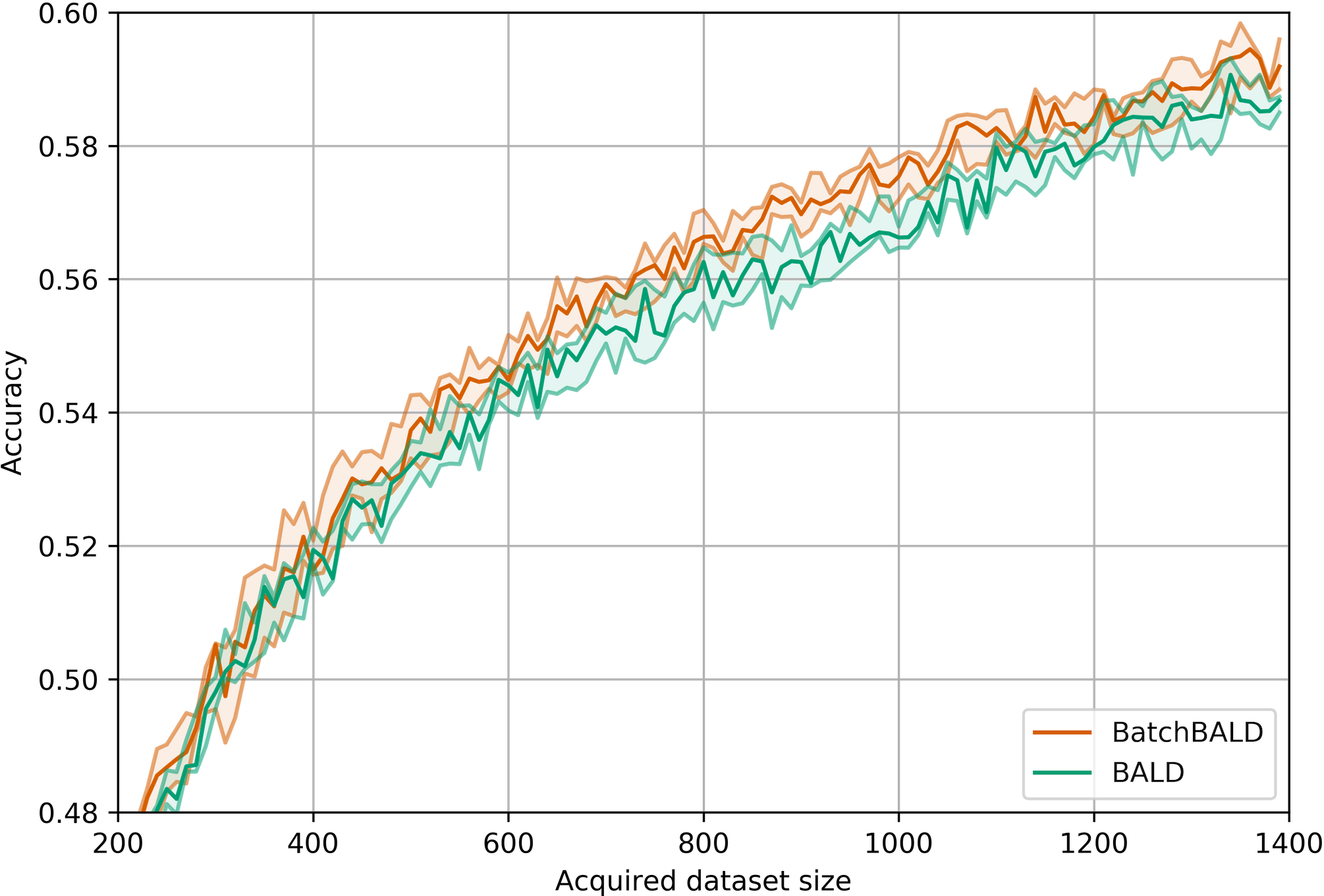}
	\caption{\emph{Performance on \emph{CINIC-10}.}
	BatchBALD outperforms BALD from 500 acquired samples onwards.}
	 \vspace{-1.5em}
    \label{cinic10_results}
\end{wrapfigure}

\subsection{CINIC-10}
CINIC-10 is an interesting dataset because it is large (270k data points) and
its data comes from two different sources: CIFAR-10 and ImageNet. To get strong
performance on the test set it is important to obtain data from both sets.
Instead of training a very deep model from scratch on a small dataset, we opt to
run this experiment in a transfer learning setting, where we use a pretrained
model and acquire data only to finetune the original model. This is common
practice and suitable in cases where data is abound for an auxiliary domain, but
is expensive to label for the domain of interest.

For the CINIC-10 experiment, we use 160k training samples for the unlabelled
pool, 20k validation samples, and the other 90k as test samples. We use an
ImageNet pretrained VGG-16, provided by PyTorch \citep{paszke2017automatic}, with a
dropout layer before a 512 hidden unit (instead of 4096) fully connected layer.
We use 50 MC dropout samples, acquisition size 10 and repeat the experiment for
6 trials. The results are in figure \ref{cinic10_results}, with the 59\% mark reached at
1170 for BatchBALD and 1330 for BALD (median).

\section{Related work}
AL is closely related to Bayesian Optimisation (BO), which is concerned with
finding the global optimum of a function \citep{snoek2012practical}, with the
fewest number of function evaluations. This is generally done using a Gaussian
Process. A common problem in BO is the lack of parallelism, with usually a
single worker being responsible for function evaluations. In real-world
settings, there are usually many such workers available and making optimal use
of them is an open problem \citep{gonzalez2016batch,alvi2019asynchronous} with
some work exploring mutual information for optimising a multi-objective problem
\citep{hernandez2016predictive}.

Maintaining diversity when acquiring a batch of data has also been attempted
using constrained optimisation \citep{guo2008discriminative} and in Gaussian
Mixture Models \citep{azimi2012batch}. In AL of molecular data, the lack of
diversity in batches of data points acquired using the BALD objective has been
noted by \citet{janz2017actively}, who propose to resolve it by limiting the
number of MC dropout samples and relying on noisy estimates.

A related approach to AL is semi-supervised learning (also sometimes referred to
as weakly-supervised), in which the labelled data is commonly assumed to be
fixed and the unlabelled data is used for unsupervised learning
\citep{kingma2014semi,rasmus2015semi}. \citet{wang2017cost, sener2017active,
sinha2019variational} explore combining it with AL.

\section{Scope and limitations}
\textbf{Unbalanced datasets} BALD and BatchBALD do not work well when the test
set is unbalanced as they aim to learn well about all classes and do not follow
the density of the dataset. However, if the test set is balanced, but the
training set is not, we expect BatchBALD to perform well.

\textbf{Unlabelled data} BatchBALD does not take into account any information
from the unlabelled dataset. However, BatchBALD uses the underlying Bayesian
model for estimating uncertainty for unlabelled data points, and semi-supervised
learning could improve these estimates by providing more information about the
underlying structure of the feature space. We leave a semi-supervised extension
of BatchBALD to future work.

\textbf{Noisy estimator} A significant amount of noise is introduced by
MC-dropout's variational approximation to training BNNs. Sampling of the joint
entropies introduces additional noise. The quality of larger acquisition batches
would be improved by reducing this noise.

\section{Conclusion}
We have introduced a new batch acquisition function, BatchBALD, for Deep Bayesian
Active Learning, and a greedy algorithm that selects good candidate batches compared to
the intractable optimal solution. Acquisitions show
increased diversity of data points and improved performance over BALD and other
methods.

While our method comes with additional computational cost during acquisition,
BatchBALD is able to significantly reduce the number of data points that need to
be labelled and the number of times the model has to be retrained, potentially
saving considerable costs and filling an important gap in practical Deep
Bayesian Active Learning.

\newpage

\subsubsection*{Acknowledgements}
The authors want to thank Binxin (Robin) Ru for helpful references to
submodularity and the appropriate proofs. We would also like to thank the rest
of OATML for their feedback at several stages of the project. AK is supported by
the UK EPSRC CDT in Autonomous Intelligent Machines and Systems (grant reference
EP/L015897/1). JvA is grateful for funding by the EPSRC (grant reference
EP/N509711/1) and Google-DeepMind. Funding for computational resources was
provided by the Allan Turing Institute and Google.

\subsubsection*{Author contributions}

AK derived the original estimator, proved submodularity and bounds, implemented BatchBALD efficiently, and ran the experiments.
JvA developed the narrative and experimental design, advised on debugging, structured the paper into its current form, and pushed it forward at difficult times.
JvA and AK wrote the paper jointly.

\newpage

\bibliographystyle{plainnat}
\bibliography{references}

\newpage

\appendix

\section{Proof of submodularity}
\label{submodular_proof}

\citet{nemhauser1978analysis} show that if a function is submodular, then a
greedy algorithm like algorithm \ref{algo:greedy_batchbald} is $1-\nicefrac{1}{e}$-approximate. Here, we show that $\BatchBALDAF$ is submodular.

We will show that $\BatchBALDAF$ satisfies the following equivalent definition of submodularity:

\newcommand{\pa}{y_1}
\newcommand{\pb}{y_2}

\begin{definition}
	A function $f$ defined on subsets of $\Omega$ is called \emph{submodular} if for every set $A \subset \Omega$ and two non-identical points $\pa, \pb \in \Omega \setminus A$:
	\begin{equation}
		f(A \cup \{\pa\}) + f(A \cup \{\pb\}) \ge f(A \cup \{\pa, \pb\}) + f(A) \label{eq:submod_ie}
	\end{equation}
\end{definition}

Submodularity expresses that there are "diminishing returns" for adding additional points to $f$.

\begin{lemma}
	$\BatchBALDAF(A, \pw) := \MI{A}{ \w}$ is submodular for $A \subset \Dpool$.
\end{lemma}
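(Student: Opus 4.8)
The plan is to verify the diminishing-returns inequality \eqref{eq:submod_ie} directly from information-theoretic identities rather than from the I-diagram picture of figure \ref{im_intuition}, since $\mu^*$ is only a \emph{signed} measure and hence need not be monotone or submodular as a set function. First I would rearrange the target inequality by moving terms, so that \eqref{eq:submod_ie} becomes the equivalent statement
\[
f(A \cup \{\pa\}) - f(A) \;\ge\; f(A \cup \{\pa, \pb\}) - f(A \cup \{\pb\}),
\]
where I write $f(A) = \MI{A}{\w}$. It therefore suffices to compare the marginal gain of adding $\pa$ to $A$ against the marginal gain of adding $\pa$ to the larger set $A \cup \{\pb\}$.

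Next I would apply the chain rule for mutual information to each marginal gain. Using $\MI{A \cup \{\pa\}}{\w} = \MI{A}{\w} + \MIc{\pa}{\w}{A}$, the two sides collapse to conditional mutual informations:
\[
f(A \cup \{\pa\}) - f(A) = \MIc{\pa}{\w}{A}, \qquad f(A \cup \{\pa,\pb\}) - f(A \cup \{\pb\}) = \MIc{\pa}{\w}{A, \pb}.
\]
Thus submodularity reduces to the single claim $\MIc{\pa}{\w}{A} \ge \MIc{\pa}{\w}{A, \pb}$, i.e.\ that conditioning additionally on $\pb$ can only decrease the information $\pa$ carries about $\w$.

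To establish this I would expand $\MIc{\pa}{\w, \pb}{A}$ in two different groupings via the chain rule,
\[
\MIc{\pa}{\w}{A} + \MIc{\pa}{\pb}{\w, A} = \MIc{\pa}{\pb}{A} + \MIc{\pa}{\w}{\pb, A},
\]
and rearrange to obtain $\MIc{\pa}{\w}{A} - \MIc{\pa}{\w}{\pb, A} = \MIc{\pa}{\pb}{A} - \MIc{\pa}{\pb}{\w, A}$. The crucial input is that the labels are conditionally independent given $\w$ (the very fact used to factorise the conditional joint entropy in section \ref{batchbald_derivation}), which forces $\MIc{\pa}{\pb}{\w, A} = 0$. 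The surviving term $\MIc{\pa}{\pb}{A}$ is a conditional mutual information and is therefore non-negative, giving $\MIc{\pa}{\w}{A} - \MIc{\pa}{\w}{\pb, A} = \MIc{\pa}{\pb}{A} \ge 0$ and closing the argument.

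The main obstacle is conceptual rather than computational: one must resist reading submodularity off the Venn-diagram intuition, because $\mu^*$ takes negative values on higher-order intersections, and indeed the quantity $\MIc{\pa}{\pb}{A} - \MIc{\pa}{\pb}{\w, A}$ is exactly such a sign-indefinite interaction term. The proof only goes through because conditional independence of the labels given $\w$ annihilates the problematic summand, leaving a manifestly non-negative mutual information; isolating that term and recognising that conditional independence kills it is the real content of the argument.
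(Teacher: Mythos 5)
Your proof is correct and is, in substance, the paper's own proof: both reduce submodularity via the chain rule and close with the same two facts, $\MIc{y_1}{y_2}{A} \ge 0$ and $\MIc{y_1}{y_2}{A, \w} = 0$ (conditional independence of the labels given $\w$), which the paper phrases respectively as subadditivity of conditional entropy and additivity of conditional entropies given $\w$. Your marginal-gain formulation $\MIc{y_1}{\w}{A} \ge \MIc{y_1}{\w}{A, y_2}$ differs from the paper's intermediate inequality $\MIc{y_1}{\w}{A} + \MIc{y_2}{\w}{A} \ge \MIc{y_1, y_2}{\w}{A}$ only by one further application of the chain rule, so the two arguments coincide step for step.
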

\begin{proof}
	\newcommand{\MIw}[1]{\MI{#1}{\w}}
	\newcommand{\MIcw}[2]{\MIc{#1}{\w}{#2}}
	Let $\pa, \pb \in \Dpool, \pa \ne \pb$.
	We start by substituting the definition of $\BatchBALDAF$ into \eqref{eq:submod_ie} and subtracting $\MI{A}{ \w}$ twice on both sides, using that $\MI{A \cup B}{ \w} - \MI{B}{ \w} = \MIc{A }{ \w}{B}$:
	\begin{align}
		&\MIw{A \cup \{y\}} + \MIw{A \cup \{x\}} \ge \MIw{A \cup \{x, y\}} + \MIw{A} \\
	\Leftrightarrow \; &\MIcw{y}{A} + \MIcw{x}{A} \ge \MIcw{x, y}{A}.
	\end{align}
	We rewrite the left-hand side using the definition of the mutual information $\MI{A}{B} = \Entropy{A} - \Hc{A}{B}$ and reorder:
	\begin{align}
		&\MIcw{y}{A} + \MIcw{x}{A} \\
		=
		&\underbrace{\Hc{\pa}{A} + \Hc{\pa}{A}}_{\ge \Hc{\pa, \pb}{A}}
		-
		\underbrace{\left(\Hc{\pa}{A, \w} + \Hc{\pb}{A, \w}\right)}_{= \Hc{\pa, \pb}{A, \w}}
		\\
		\ge &\Hc{\pa, \pb}{A} - \Hc{\pa, \pb}{A, \w}\\
		=
		&\MIcw{x, y}{A}
		,
	\end{align}
	where we have used that entropies are subadditive in general and additive given $\pa \Perp \pb \; \vert \; \w$.
\end{proof}

Following \citet{nemhauser1978analysis}, we can conclude that algorithm \ref{algo:greedy_batchbald} is $1 - \nicefrac{1}{e}$-approximate.

\section{Connection between BatchBALD and BALD}
\label{batchbald_bald_connection}
In the following section, we show that BALD approximates BatchBALD and that
BatchBALD approximates BALD with acquisition size 1. The BALD score is an upper
bound of the BatchBALD score for any candidate batch. At the same time,
BatchBALD can be seen as performing BALD with acquisition size 1 during each
step of its greedy algorithm in an idealised setting.

\subsection{BALD as an approximation of BatchBALD}
\label{bald_approximates_batchbald}
Using the subadditivity of information entropy and the independence of the
$\yi$ given $\w$, we show that BALD is an approximation of BatchBALD
and is always an upper bound on the respective BatchBALD score:
\begin{align}
	&\BatchBALDAF \left ( \left \{ \xbs \right \}, \probc{\w}{\Dtrain} \right ) \\
	= &\Hc{\ybs}{\xbs, \Dtrain} - \E{\Hc{\ybs}{\xbs, \w, \Dtrain}}{\probc{\w}{\Dtrain}} \\
	\le &\sum_{\ii=1}^{\numB} \Hc{\yi}{\x_\ii, \Dtrain} - \sum_{\ii=1}^{\numB} \E{\Hc{\yi}{\x_\ii, \w, \Dtrain}}{\probc{\w}{\Dtrain}} \\
	= &\sum_{\ii=1}^{\numB} \MIc{\yi}{\w}{\x_\ii, \Dtrain} = \BALDAF \left ( \left \{ \xbs \right \}, \probc{\w}{\Dtrain} \right )
\end{align}

\subsection{BatchBALD as an approximation of BALD with acquisition size 1}
\label{batchbald_equivalence}

\newcommand{\matAnd}{A_{n-1}}
\newcommand{\ry}{\tilde{y}}
\newcommand{\rynd}{\ry_{n-1}}
\newcommand{\rynds}{\ry_1, ..., \ry_{n-1}}

To see why BALD with acquisition size 1 can be seen as an upper bound for BatchBALD performance in an idealised setting,
we reformulate line \ref{code:batch_bald_computation} in algorithm \ref{algo:greedy_batchbald} on page \pageref{algo:greedy_batchbald}.
\begin{align}
\shortintertext{
	Instead of the original term $\BatchBALDAF \left ( \matAnd \cup \left \{ \x \right \},
	\probc{\w}{\Dtrain} \right )$, we can equivalently maximise
}
  & \BatchBALDAF \left ( \matAnd \cup \left \{ \x \right \}, \probc{\w}{\Dtrain} \right ) -
\BatchBALDAF \left ( \matAnd, \probc{\w}{\Dtrain} \right) \\
\shortintertext{
	as the right term is
	constant for all $\x \in \Dpool \setminus \matAnd$ within the inner loop, which, in turn, is equivalent to
}
    = & \MIc{\ynds, \y }{\w}{\xnds, \x, \Dtrain} -
	\MIc{\ynds}{\w}{\xndv \Dtrain} \\
	= & \MIc{\y }{\w}{\x, \ynds, \xndv, \Dtrain}
\shortintertext{
	once we expand $\matAnd = \{ \xnds \}$. This means that, at each step of the inner loop, our greedy algorithm is maximising the mutual information of the individual available data points
	with the model parameters conditioned on all the additional data
	points that have already been picked for acquisition and the existing training
	set. Finally, assuming training our model captures all available information,
}
	\ge & \MIc{\y }{\w}{\x, \Dtrain \cup \left\{ (\x_1, \ry_1), ...., (\xnd, \rynd) \right\}} \label{eq:bald_batchbald_eq} \\
	= & \BALDAF \left(
			\left\{ \x \right\},
			\probc{\w}{\Dtrain \cup \left\{ (\x_1, \ry_1), ...., (\xnd, \rynd) \right\}}
		\right),
\end{align}
where $\rynds$ are the actual labels of $\xns$. The mutual information decreases as $\w$ becomes
more concentrated as we expand its training set,
and thus the overlap of $\y$ and $\w$ will become smaller (in an information-measure-theoretical sense).

This shows that every step $n$ of the inner loop in our algorithm is at most as good
as retraining our model on the new training set $\Dtrain \cup \left\{ (\x_1, \ry_1), ...., (\xnd, \rynd) \right\}$
and picking $x_n$ using $\BALDAF$ with acquisition size 1.

\textbf{Relevance for the active training loop.}
We see that the active training loop as a whole is computing a greedy $1 - \nicefrac{1}{e}$-approximation of
the mutual information of all acquired data points over all acquisitions with the model parameters.

\section{Sampling of configurations}
\label{batchbald_mc_approx}
{
	\renewcommand{\pw}{\pwfuchsia}
	We are using the same notation as in section \ref{batchbald_derivation}.
We factor $\probc{\ynv}{\w}$ to avoid recomputations and rewrite
$\Entropy{\ynv}$ as:
\begin{align}
	\Entropy{\ynv} &=
		\chainedE{
			\E{
				-\log{\prob{\ynv}}
			}{\probc{\ynv}{\w}}
		}{\pw} \\
	&=
		\chainedE{
			\E{
				-\log{\prob{\ynv}}
			}{\probc{\yndv}{\w} \probc{\yn}{\w} }
		}{\pw} \\
	&=
		\chainedE{
			\chainedE{
				\E{
					-\log{\prob{\ynv}}
				}{\probc{\yn}{\w}}
			}{\probc{\yndv}{\w}}
		}{\pw}
\end{align}
To be flexible in the way we sample $\yndv$, we perform importance sampling of
$\probc{\yndv}{\w}$ using $\prob{\yndv}$, and, assuming we also have $\numM$
samples $\yndvr$ from $\prob{\yndv}$, we can approximate:
\begin{align}
	& \Entropy{\ynv} =
		\chainedE{
			\E{
				\frac{\probc{\yndv}{\w}}{\prob{\yndv}}
				\E{
					-\log{\prob{\ynv}}
				}{\probc{\yn}{\w}}
			}{\prob{\yndv}}
		}{\pw} \\
	&=
		\chainedE{
			\chainedE{
				\E{
					-\frac{\probc{\yndv}{\w}}{\prob{\yndv}} \log{
						\E{
							\probc{\yndv}{\w} \probc{\ynv}{\w}
						}{\pw}
					}
				}{\probc{\yn}{\w}}
			}{\pw}
		}{\prob{\yndv}} \\
	& \approx
		-\frac{1}{\numM} \sum_{\yndvr}^{\numM}{
			\sum_{\ynr}{
				\frac{
					\frac{1}{\numK} \sum_{\wj}{
						\probc{\yndvr}{\wj} \probc{\ynr}{\wj}
					}
				}{
						\prob{\yndvr}
				}
				\log \left ( {
					\frac{1}{\numK} \sum_{\wj} \probc{\yndvr}{\wj} \probc{\ynr}{\wj}
				} \right )
			}
		} \\
	& =
		-\frac{1}{\numM} \sum_{\yndvr}^{\numM}{
			\sum_{\ynr}{
				\frac{
					\left ( \hat{P}_{1:{\numN-1}} \hat{P}_{\numN}^T \right )_{\yndvr, \ynr}
				}{
					\left ( \hat{P}_{1:{\numN-1}} \mathbbm{1}_{\numK,1} \right )_{\yndvr}
				}
				\log \left( {
					\frac{1}{\numK} \left ( \hat{P}_{1:{\numN-1}} \hat{P}_{\numN}^T \right )_{\yndvr, \ynr}
				} \right )
			}
		}, \label{eq:is_matrix}
\end{align}
where we store $\probc{\yndvr}{\wj}$ in a matrix $\hat{P}_{1:\numN-1}$ of shape
$\numM \times \numK$ and $\probc{\ynr}{\wj}$ in a matrix $\hat{P}_{\numN}$ of
shape $\numC \times \numK$ and $\mathbbm{1}_{\numK,1}$ is a $\numK \times 1 $
matrix of $1$s. Equation \eqref{eq:is_matrix} allows us to cache $\hat{P}_{1:\numN-1}$
inside the inner loop of algorithm \ref{algo:greedy_batchbald} and use batch
matrix multiplication for efficient computation.
}

\newpage

\section{Ablation study on Repeated MNIST}
\label{ablation_rmnist}
To better understand the effect of redundant data points on BALD and BatchBALD,
we run the RMNIST experiment with an increasing number of repetitions. The
results can be seen in figure \ref{ablation_rmnist_figure}. We use the same setup as in
section \ref{repeated_mnist}. BatchBALD performs the same on all repetition
numbers (100 data points till 90\%). BALD achieves 90\% accuracy at 120 data
points (0 repetitions), 160 data points (1 repetition), 280 data points (2
repetitions), 300 data points (4 repetitions). This shows that BALD and
BatchBALD behave as expected.

\begin{figure}[h]
	\centering
	\includegraphics[height=5cm]{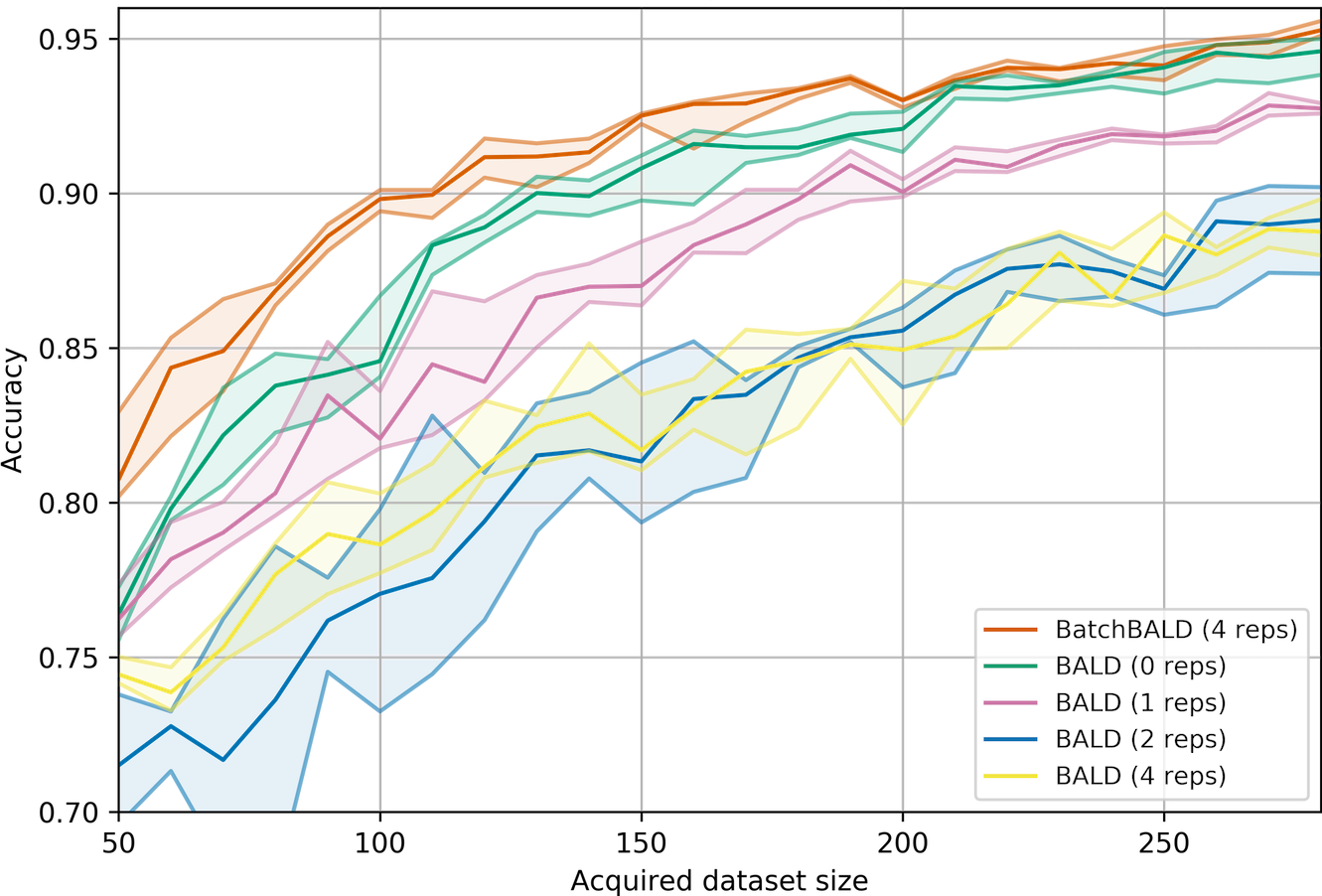}
	\caption{\emph{Performance of \emph{BALD} on \emph{Repeated MNIST} for increasing amount of repetitions.}
	We see that BALD performs worse as the number of repetitions is increased,
	while BatchBALD outperforms BALD with zero repetitions.}
	\label{ablation_rmnist_figure}
\end{figure}

\section{Additional results for Repeated MNIST}
\label{rmnist_comparison_graph}

We show that BatchBALD also outperforms Var Ratios \citep{freeman1965elementary}
and Mean STD \citep{kendall2015bayesian}.

\begin{figure}[h]
	\centering
	\includegraphics[height=5cm]{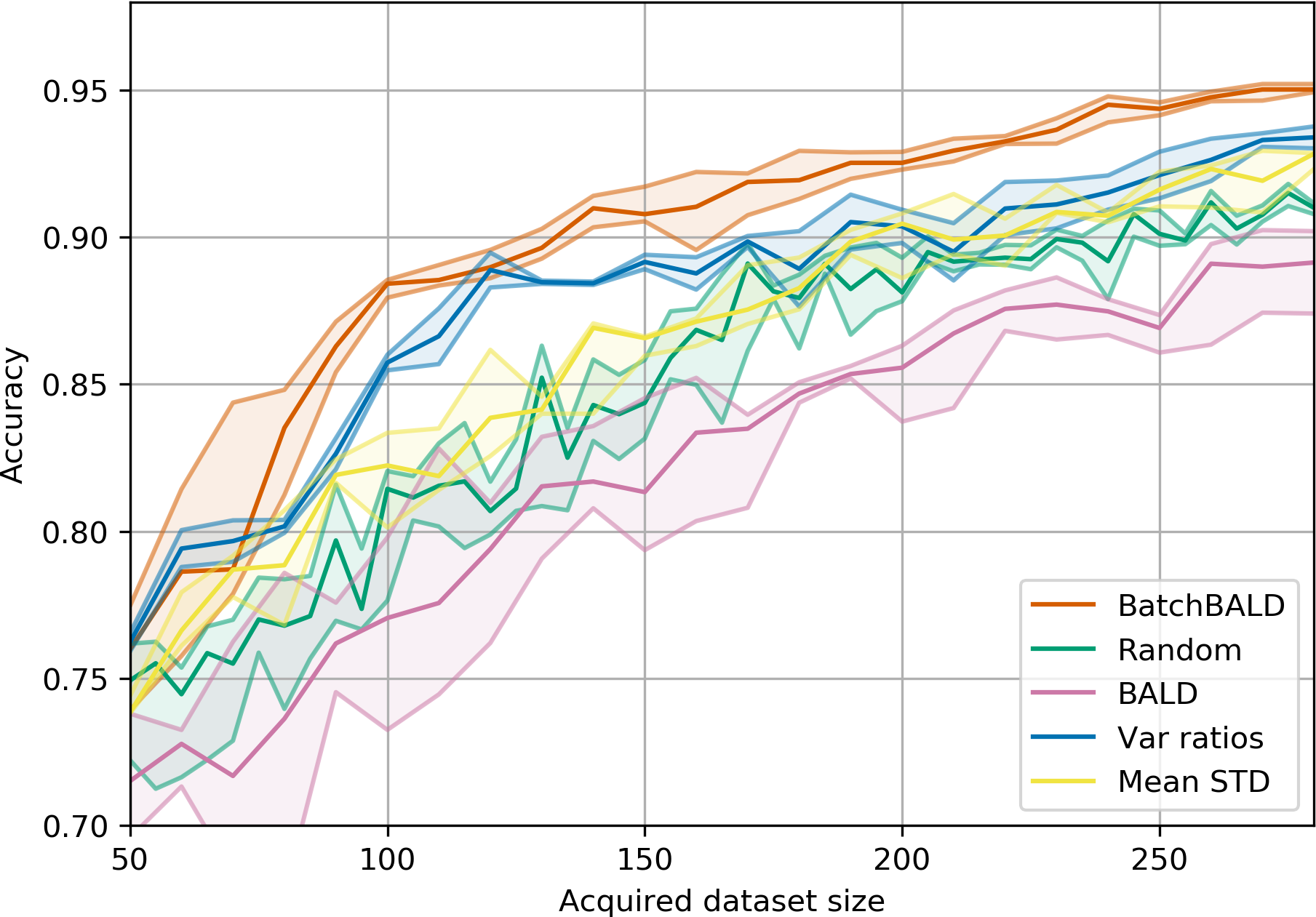}
	\caption{\emph{Performance on \emph{Repeated MNIST}.}
	BALD, BatchBALD, Var Ratios, Mean STD and random acquisition with acquisition size 10 and 10 MC dropout samples.}
\end{figure}

\newpage
\section{Example visualisation of EMNIST}
\label{emnist_visualisation}

\begin{figure}[h]
	\centering
	\includegraphics[height=5cm]{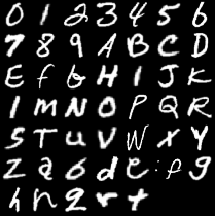}
	\caption{\emph{Examples of all 47 classes of EMNIST}}
\end{figure}

\section{Entropy and class acquisitions including random acquisition}
\label{emnist_random_diversity}
\begin{figure}[h]
	\begin{minipage}[t]{0.49\textwidth}
		\centering
		\includegraphics[width=0.95\linewidth]{images/EMNIST_zero_initial_data.png}
		\caption{\emph{Performance on \emph{EMNIST}.}
		BatchBALD consistently outperforms both random acquisition and BALD
		while BALD is unable to beat random acquisition.
		}
	\end{minipage}
	\hfill
	\begin{minipage}[t]{0.49\textwidth}
		\centering
        \includegraphics[width=0.99\linewidth]{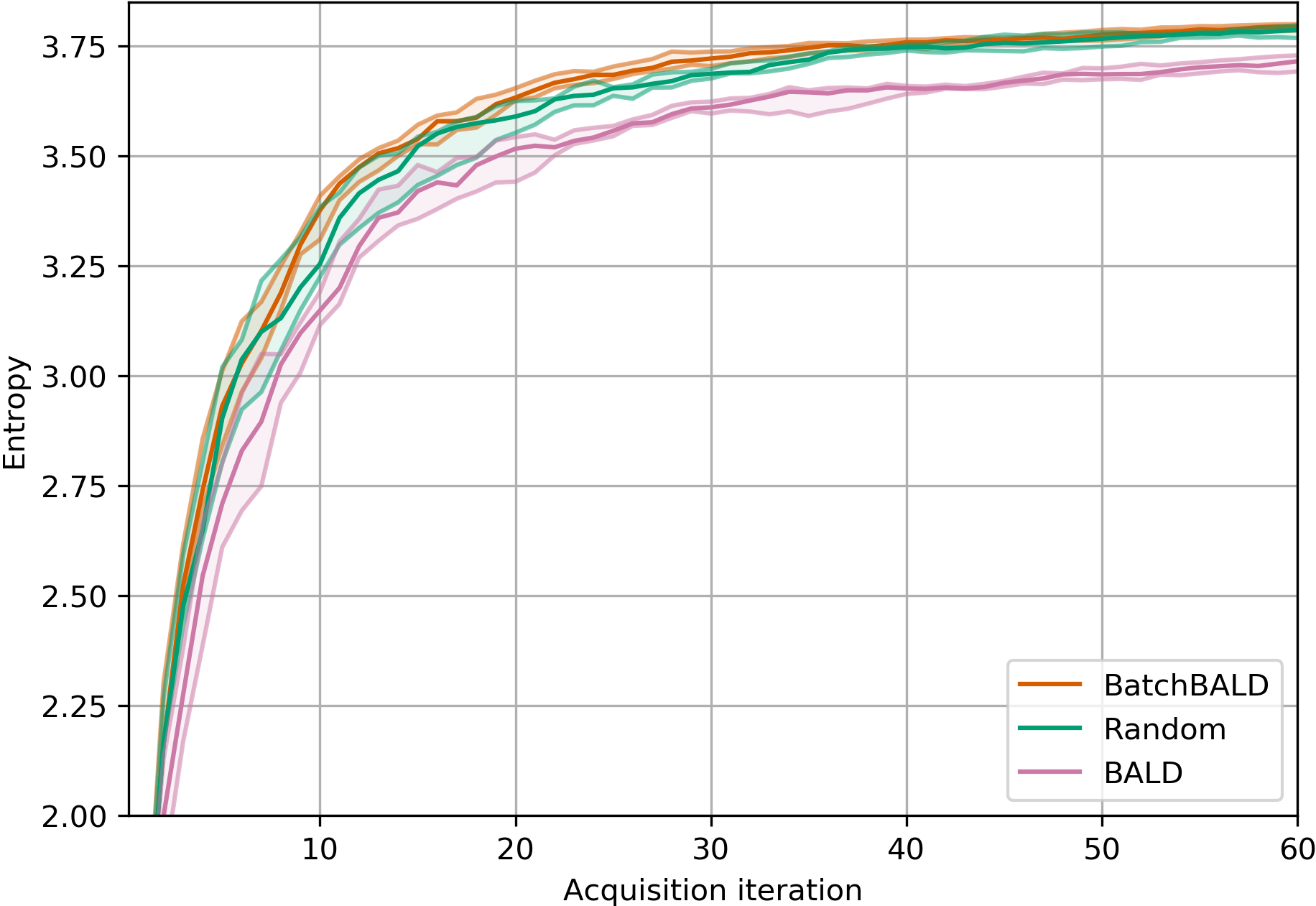}
		\caption{\emph{Entropy of acquired class labels over acquisition steps on \emph{EMNIST}.}
		BatchBALD steadily acquires a more diverse set of data points than BALD.
		}
	\end{minipage}
\end{figure}
\begin{figure}[h]%
	\centering
	\includegraphics[width=0.99\linewidth]{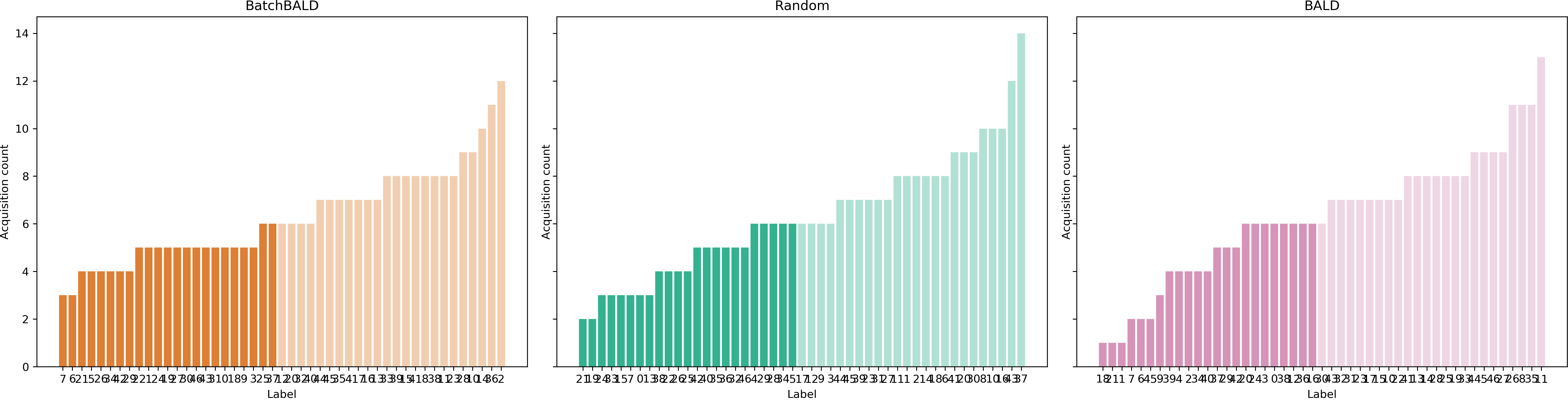}
	\caption{\emph{Histogram of acquired class labels on \emph{EMNIST}.} BatchBALD left and
	BALD right. Classes are sorted by number of acquisitions.
	Several EMNIST classes are underrepresented in BALD and random acquisition while BatchBALD acquires classes
	more uniformly. The histograms were created from all acquired points at the
	end of an active learning loop}
    \label{histogram_labels_emnist}
\end{figure}

\end{document}